\documentclass[11pt]{amsart}

\usepackage[T1]{fontenc}
\usepackage[utf8]{inputenc}
\usepackage{bm}
\usepackage{bbm}

\usepackage{amsmath,amssymb,amsfonts,amsthm}
\usepackage{mathtools}

\usepackage[a4paper,margin=1in]{geometry}
\usepackage[colorlinks=true,
            linkcolor=blue,
            citecolor=blue,
            urlcolor=blue]{hyperref}

\usepackage{graphicx}
\usepackage{tikz}
\usepackage{float}

\usepackage{enumitem}

\newtheorem{theorem}{Theorem}[section]
\newtheorem{lemma}[theorem]{Lemma}

\theoremstyle{definition}

\theoremstyle{remark}

\title{Interpreting FCDNNs via RG on Exponential Family}

\author[F. Gong and Z. Xia]
{Fuzhou Gong$^{1,2}$ and Zigeng Xia$^{1}$}

\date{}

\begin{document}

\maketitle

\begingroup
\renewcommand\thefootnote{\arabic{footnote}}

\footnotetext[1]{Academy of Mathematics and Systems Science, Chinese Academy of Sciences,
Beijing 100190, China.}

\footnotetext[2]{School of Mathematics, Northwest University, Xi'an 710127, China.}

\footnotetext[3]{Email: fzgong@amt.ac.cn}

\footnotetext[4]{Email: xiazigeng@amss.ac.cn}

\endgroup

\begin{abstract}
We consider establishing the interpretability theory of deep learning through constructing a corresponding relationship between the renormalization group (RG) method in statistical physics and the training process of deep neural networks (DNNs). We have proved the constructed relationship using the one-dimensional Ising model as the input data. In this paper we generalize our results to the case of continuous input data, which is a necessary preparation for applying the corresponding framework to real-world data. To be representative, we consider a class of data distribution in the exponential family. We prove that when the parameters of fully connected (FC) DNNs achieve their optimal value after training, the characteristic parameters of the feature layer output of DNNs are equal to the fixed points of the characteristic parameters of input data under RG method for continuous fields. This conclusion shows that the training process of DNNs is equivalent to RG calculation on this kind of data and therefore the network can extract main features from the input data just like RG. Also, the equivalence further validates the correspondence framework we have established, providing an explanation for the outstanding performance of DNNs on real-world data.
\end{abstract}


\section{Introduction} \label{section1}

\subsection{Problem description and method framework}
The focus of this paper is the interpretability of fully connected deep neural networks (FCDNNs). In \cite{ref13, ref14}, we attempted to establish a theoretical framework for the interpretability of DNNs. From a theoretical perspective, an important reason for the success of DNNs is their ability to effectively extract the main features from large-scale datasets and, based on this ability, produce the target output for machine learning tasks. Therefore, drawing on the intuitive similarity between the compression of data in the forward propagation process of DNNs and the idea of transitioning from microscopic descriptions to macroscopic properties in statistical physics, we adopt the renormalization group (RG) method as the theoretical foundation of the framework. By leveraging the self-similarity exhibited by systems near critical points, the RG method applies coarse-graining or scale transformations (known as renormalization transformations) to statistical physical systems composed of numerous microscopic units and deduces the macroscopic features of the system. In this paper, we extend the approach of \cite{ref13, ref14}, shifting from the discrete Ising model to a more practical scenario involving continuous data distributions. Specifically, we consider cases where the data follows a class of exponential family distributions, the network architecture is a FCDNN, and the training algorithm employs simulated annealing.

For detailed information on interpretability and the RG method we adopted, please refer to reference \cite{ref13}. In \cite{ref13} we constructed the correspondence relationship between DNNs and RG and proved the equivalence between them on the one-dimensional Ising model on the finite lattice. For detailed information on the framework we established, as well as an explanation of the motivations behind our method, please refer to \cite{ref13}. Here, we briefly outline the steps of the method. The following figure shows the conception of our approach:
\begin{figure}[H]
	\centering
	\includegraphics[width=0.9\textwidth]{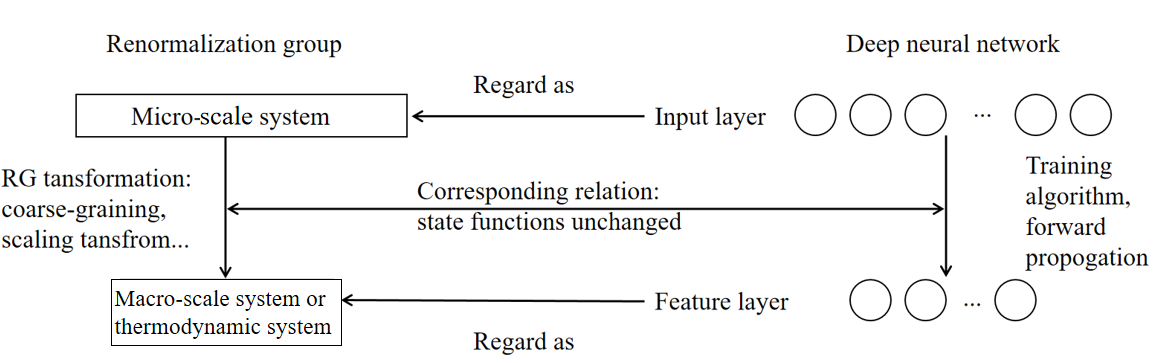}\\
	\caption{Framework of our method}\label{structure}
\end{figure}
Our method is composed of three steps:

Step 1. The input dataset of a DNN is regarded as a micro-scale statistical physical system. It can be analysed by the RG method to obtain a macro-scale system or a thermodynamic system with the characteristic parameters achieving their fixed points. This is called the canonical RG process.

Step 2. The main features extracted by the DNN (usually they are the results in the penultimate layer before final fully connected layer, named feature layer) are regarded as a macro-scale system or a thermodynamic system. The Hamiltonian and partition function of this system can be defined similarly to the original input data system in Step 1. We consider that the network structure and the training process accomplish the renormalization of the input data system. We call it non-canonical RG and restrict it by the same criteria as the canonical RG. Like the case of canonical RG, the main features extracted by the DNN exhibit self-similarity and therefore are dimensionless in terms of dimensional analysis.

Step 3. We prove that the two RG processes actually extract the same macroscopic features by comparing the feature layer output system of DNN and the macro-scale system in RG, showing that their characteristic parameters have the same limit, which corresponds to the critical point of the model.

The RG method is an effective method in statistical physics for deriving macroscopic features from a system's microscopic description. The training process of a DNN is equivalent to RG in terms of its operation on specific data distributions, which explains why DNNs can successfully extract the main features from data. The knowledge learned by a DNN corresponds to the macroscopic features of the system in the physical sense, representing the overall properties of the data. The network utilizes these features in the layer immediately preceding the output layer to arrive at its final decision.

The main conclusion we obtained in \cite{ref13} is that, when the one-dimensional Ising model on finite lattice is used as the input data for a FCDNN, during the simulated annealing training process of the network, as the network parameters converge to the optimal values in probability, the coupling constant in the partition function of the physical system defined by the outputs of the corresponding feature layers converges to zero, which is consistent with the stable fixed point of the coupling constant of the one-dimensional Ising model obtained by the real-space RG method.

Compared with \cite{ref13}, the work in this paper represents a necessary extension and deepening of the framework. The real-space RG of the one-dimensional Ising model is theoretically clear and rigorous, so in \cite{ref13} we only need to discuss the definition of relevant physical quantities in the non-canonical RG process of DNNs. The establishment of conclusions such as Theorem 4.1 in \cite{ref13} helps demonstrate more clearly the rationality of the framework we constructed, laying a methodological foundation for subsequent handling of more complex data distributions. In practical applications, the training data of DNNs follow continuous distributions, making it necessary to extend this research direction.

\subsection{Physical principles and research objectives}

When we consider continuous datasets, it is natural to start with the representative exponential family of distributions in statistics. The exponential family provides a unified framework that encompasses many important probability distributions and covers a wide range of common data distributions encountered in practice. Therefore, considering the exponential family does not lead to a loss of generality and holds practical significance. 

The difference between continuous distribution data and the Ising model case is that, in the real-space RG method for discrete physical models, the renormalization transformation is a coarse-graining process, which is carried out by combining adjacent microscopic units into an equivalent block. For continuous data, coarse-graining operations cannot be performed. Instead, a continuously varying scale parameter $\lambda$ is introduced, and the data system is treated as a continuous field, utilizing the RG method based on scale transformation. The form of the exponential family can naturally be viewed as a Gibbs distribution, which allows us to directly apply the RG method. Also, in the RG method for continuous fields, the Hamiltonian of the system, i.e., the energy function, must satisfy the Landau-Ginzburg-Wilson condition \cite{ref54}. The exponential family distribution data we consider satisfies this condition.

Due to the differences in specific methods, for exponential family distributions, we need to reconsider the implementation approaches in the 3 steps mentioned above. We introduce step by step the physical principles followed in establishing the correspondence relationship in this paper, particularly the parts that differ from \cite{ref13}.

First, in Step 1, we need to describe how the Hamiltonian is defined when the data of an exponential family distribution serves as the microscopic representation of a statistical physical system. When defining this physical system, we refer to the canonical ensemble (see \cite{ref53} for related definitions and details). The canonical ensemble is a statistical ensemble that describes a closed system in thermal equilibrium with its surroundings. In such a system, the number of particles and the temperature remain constant, while energy exchange with the environment is allowed. It is characterized by the probability distribution of the system being in a given microscopic state based on its energy. Since the exponential family distribution naturally takes the form of the Gibbs distribution in the canonical ensemble, the form of its Hamiltonian $\mathcal{H}$ is determined. The Hamiltonian represents the total energy of the system in a specific microscopic state. Therefore here it represents the energy of a given data point, with the Boltzmann factor $\mathrm{e}^{-\mathcal{H}}$ proportional to the distribution density function of the system in that microscopic state. In the RG method for continuous fields, the system under consideration is described by a continuous field $\sigma(x),\ x{\in}\mathbb{R}^N$, and the form of its probability distribution is the same as that of the canonical ensemble distribution. When the distribution density function of the data is given, its support is in $\mathbb{R}^N$, and for a specific dataset, the values of the Hamiltonian should be concentrated on the data sample points. Therefore, in the case we consider, the field corresponding to the data system is vector-valued and defined as ${\sigma}_{i}(x)=x_i, i=1,2,\cdots,N.$ In the canonical ensemble, the partition function of the system is defined as the sum of Boltzmann factors over all possible microscopic states. In the continuous field model, this corresponds to an integral over all possible field configurations in the space of continuous functions, whereas in the case of network training data, it correspondingly becomes an integral over $\mathbb{R}^N$ and equals the normalization factor of the distribution. Based on the above principles, exponential family data can be regarded as a physical system to which the RG method for continuous fields can be directly applied.

In Step 2, when defining the non-canonical renormalization process, we primarily use the following principles: (1) The Hamiltonian of the DNN feature layer output system has exactly the same form as the Hamiltonian of the input data system in Step 1. Moreover, in the partition function of this system, unlike a continuous field that can take arbitrary values over the whole function space, a neural network has its input-output functional relationship entirely determined by its parameters once the network architecture is fixed. The system therefore only possesses the energy states corresponding to the parameter values realized during the training process. Consequently, in the partition function of the feature layer output, the integral over the entire function space is replaced by an integral with respect to the image measure induced by the distribution of the network parameters during training. The finite-dimensional property of DNNs and their data avoid the issue in field theory of being unable to define a uniform measure on infinite-dimensional spaces, making our theory rigorous. (2) The main difference from the Ising model case is that, in addition to keeping the partition function invariant before and after the renormalization transformation, we use the renormalization group equations (RGEs) introduced by the RG method for continuous fields as a criterion. From the mathematical perspective, the one-dimensional Ising model has only a single characteristic parameter: the coupling constant, so the condition of partition function invariance is sufficient to reach the conclusion. In contrast, exponential family data with multiple characteristic parameters requires additional equations. A more fundamental difference is that the RGEs describe the change in the (connected) correlation functions of the system under the RG transformation. These equations have a remainder term, which vanishes at the fixed point where the scale parameter tends to infinity. The equations are exact only in the limiting case, and we are only concerned with this limit.

Finally in Step 3, when proving the equivalence between the non-canonical RG and the canonical RG, compared to the proof in \cite{ref13}, where the limit of the single characteristic parameter of the DNN feature layer output system being $0$ could be directly obtained using inequalities, for exponential family data with multiple characteristic parameters, it is necessary to first prove the existence of the limit of the characteristic parameters, and then combine the RGEs as conditions to derive that the limit of the characteristic parameters is the same as the fixed point under RG. In the proof, based on the definition of correlation functions, the multivariate Laplace transform over the complex domain and its inversion formula is introduced as a tool.

The result of this paper can be summarized as the following theorem:
\begin{theorem}
Let $F$ be the feature layer output of a FCDNN, parameters $\bm{W}^{(t)}$ trained by the simulated annealing algorithm with the learning rate chosen properly. The distribution density function of input data is of the following form that belongs to the exponential family:
\begin{align}
\mathbb{P}(x, \theta) = \frac{1}{\mathcal{Z(\theta)}} \mathrm{e}^{-\sum_{r=1}^{R} \theta_r \left( \sum_{i=1}^N x_i^2 \right)^r}, x\in \mathbb{R}^N,
\end{align}
where ${\theta}_r>0, r=1,2,\cdots,R$ are the characteristic parameters of the data. We assume: (1) The fixed points ${\theta}^{*}_r, r=1,2,\cdots,R$ exist under the canonical RG of continuous fields; (2) According to the RGEs, the connected correlation functions of the output system of DNN remain invariant through training. Then for the feature layer output system we have $\lim_{t \rightarrow \infty} \tilde{\theta}_r^t,\ r=1,2,\cdots,R$ exists and $\lim_{t \rightarrow \infty} \tilde{\theta}_r^t = {\theta}_r^*, r=1,2,\cdots,R$ under the condition $\tilde{\mathcal{Z}}(\tilde{\theta}_t)=\mathbb{E}[\mathrm{e}^{-\sum_{k} \tilde{\mathcal{H}}(F(x_k, \bm{W}^{(t)}),\tilde{\theta}_t)}]\equiv \mathcal{Z}(\theta)$.
\end{theorem}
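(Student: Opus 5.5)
The argument has two stages. First I show that the characteristic parameters $\tilde\theta^t=(\tilde\theta^t_1,\dots,\tilde\theta^t_R)$ of the feature layer output system converge as $t\to\infty$. Then I identify the limit with the canonical fixed point $\theta^*$ using the RGEs of assumption (2).

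For the first stage I use the simulated annealing result inherited from \cite{ref13}. With a suitable learning rate, $\bm{W}^{(t)}$ converges in probability to the set of optimal parameters. Hence the law of $F(x,\bm{W}^{(t)})$ converges weakly, and the induced image measure in the partition function $\tilde{\mathcal{Z}}(\tilde\theta_t)$ converges. The constraint $\tilde{\mathcal{Z}}(\tilde\theta_t)\equiv\mathcal{Z}(\theta)$ defines $\tilde\theta_t$ implicitly. I then need three facts.
\begin{itemize}[label={}]
\item (a) The map $\tilde\theta\mapsto\tilde{\mathcal{Z}}$ is continuous and strictly monotone in each coordinate on $(0,\infty)^R$. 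This follows because $\mathrm{e}^{-\sum_r\theta_r s^r}$ is strictly decreasing in each $\theta_r$ for $s>0$.
\item (b) The family $\{\tilde\theta_t\}$ stays in a compact subset of $(0,\infty)^R$. If a coordinate tended to $0$ or $\infty$, then $\tilde{\mathcal{Z}}$ would blow up or vanish, contradicting $\tilde{\mathcal{Z}}\equiv\mathcal{Z}(\theta)$. Here I use $\theta_r>0$ and the growth of $(\sum x_i^2)^R$, which is the Landau--Ginzburg--Wilson condition.
\item (c) Any two subsequential limits $\hat\theta,\hat\theta'$ must coincide. Both satisfy the limiting partition-function equation and, by assumption (2), the limiting connected correlation identities. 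I show that these identities force equality.
\end{itemize}
A scalar partition-function constraint alone cannot pin down $R$ parameters, so (c) must use the correlation functions. This is where I expect the main difficulty.

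For (c) and the identification step, the plan is as follows. The connected correlation functions are the cumulants, i.e.\ derivatives of $\log$ of the generating function
\[
\mathcal{Z}(\theta,J)=\int \mathrm{e}^{-\sum_r\theta_r|x|^{2r}+J\cdot x}\,dx,
\]
and of its analogue for the output system. Because the Hamiltonian depends only on $s=|x|^2$, the relevant object reduces to a one-variable Laplace-type transform in $s$ of the radial density $s^{N/2-1}\mathrm{e}^{-\sum_r\theta_r s^r}$. Its moments $\int s^{m}\,s^{N/2-1}\mathrm{e}^{-\sum\theta_r s^r}ds$ are determined by the cumulants of all orders.

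Assumption (2) states that these connected correlations of the output system match, in the limit $\lambda\to\infty$ where the RGE remainder vanishes, those of the canonical RG at its fixed point $\theta^*$. Equality of all moments then gives equality of the generating functions on a complex neighbourhood. The multivariate Laplace inversion formula on the complex domain then gives equality of the radial densities, i.e.\ $\mathrm{e}^{-\sum_r\hat\theta_r s^r}/\tilde{\mathcal{Z}}=\mathrm{e}^{-\sum_r\theta^*_r s^r}/\mathcal{Z}^*$ for all $s>0$. Taking logarithms and comparing the coefficients of the polynomial in $s$ gives $\hat\theta_r=\theta^*_r$ for every $r$. The constant term is consistent because of the normalization condition $\tilde{\mathcal{Z}}\equiv\mathcal{Z}$.

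This single argument shows that every subsequential limit equals $\theta^*$. Combined with compactness from (b), this yields both existence of $\lim_t\tilde\theta^t_r$ and $\lim_t\tilde\theta^t_r=\theta^*_r$.

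The main obstacle is justifying the passage from correlation-function invariance to moment equality uniformly in $t$. This means interchanging the limits $t\to\infty$ and $\lambda\to\infty$ with differentiation in $J$ under the expectation over the parameter distribution. I would handle it with dominated convergence, using the Gaussian-or-faster decay supplied by $\theta_R>0$ and the compactness of $\{\tilde\theta_t\}$ from (b).
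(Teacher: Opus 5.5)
\paragraph{Comparison with the paper.} Your identification step is essentially the second half of the paper's proof. There, the Laplace transform $\hat{\mathcal Z}_*$ of $f_*$ on $\mathbb{R}^{MS}_+$ is shown to be entire and zero-free, so its logarithm is an everywhere-convergent generating function of the connected correlations. Equal coefficients then give equal transforms, Laplace inversion gives $f_*(\cdot,\tilde\theta^*)=f_*(\cdot,\theta^*)$ a.e., and comparing polynomial coefficients gives $\tilde\theta^*_r=\theta^*_r$. Your radial version with normalized densities is equivalent, and it has the advantage of not needing the $\vec\alpha=0$ coefficient.

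There are two differences in how the hypotheses enter. First, you read hypothesis (2) as matching the canonical fixed-point correlations directly. The paper instead states it as invariance between $t=0$ and $t=\infty$, and brings $\theta^*$ in through the factor $\mathrm{e}^{Ng(t)}$ in $\mathcal Z_t$. That factor is chosen with $g(0)=0$ and $\mathcal Z\mathrm{e}^{-Ng_*}=\mathrm{e}^{-\tilde{\mathcal H}(\phi,\theta^*,\bm W^*,M)}$, and you never introduce it. Without it, the limiting constraint reads $\mathrm{e}^{-\sum_r\hat\theta_rH_r(\bm W^*)}=\mathcal Z(\theta)$, which does not involve $\theta^*$ at all.

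Second, and more substantially, the existence step is genuinely different. The paper obtains existence before touching correlations, from the partition-function constraint alone. It splits $\Lambda_t=\mathcal Z\mathrm{e}^{-Ng(t)}$ over the event $H_{At}$, bounds the complement via $L(\bm W^{(t)})\ge\delta_2$ and Lemma~\ref{lemma1}, and sandwiches. Your remark that one scalar constraint cannot fix $R$ parameters is correct, and it applies to exactly this argument. The sandwich only yields $\sum_r\tilde\theta^t_rH_r(\bm W^*)\to-\ln\Lambda_*$. The subsequent step to $\sum_r(\limsup_t\tilde\theta^t_r-\liminf_t\tilde\theta^t_r)H_r(\bm W^*)\le0$ uses the sub- and superadditivity of $\limsup$ and $\liminf$ in the wrong direction. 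For $R\ge2$, the constraint only places $\tilde\theta^t$ on a level set of a coordinatewise decreasing function, and $\tilde\theta^t$ may oscillate along it. So your compactness-plus-uniqueness scheme is what actually delivers existence. The price is that the correlation hypothesis must hold at \emph{every} subsequential limit, whereas the paper poses it only at $\tilde\theta^*=\lim_t\tilde\theta^t$, which presupposes the limit. You should state this reformulation explicitly.

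\paragraph{The error in step (b).} The output system's partition function is an expectation over the law of $\bm W^{(t)}$, which concentrates at $\bm W^*$. The features lie in $(0,1)$, so $H_r(\bm W^{(t)})\in(0,SM^r)$. Hence the growth of $|x|^{2R}$ plays no role here, and a coordinate of $\tilde\theta^t$ tending to $0$ blows nothing up, because the other coordinates can compensate.

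What the constraint (with the paper's $g$) does give is the following:
\begin{itemize}
\item an upper bound, since $\tilde\theta^{t_n}_r\to\infty$ would force $\Lambda_{t_n}\to0\neq\Lambda_*$;
\item the relation $\sum_r\hat\theta_rH_r(\bm W^*)=\sum_r\theta^*_rH_r(\bm W^*)$ at every subsequential limit $\hat\theta$, by bounded convergence.
\end{itemize}
Consequently, subsequential limits lie in $[0,\infty)^R$ and may have $\hat\theta_R=0$. This affects your argument in two places:
\begin{itemize}
\item Your decay estimates must use $\mathrm{e}^{-c\rho^{r_0}}$ for the largest $r_0$ with $\hat\theta_{r_0}>0$, not ``$\theta_R>0$''.
\item The case $\hat\theta=0$ must be excluded, since there $f_*$ is not integrable and the correlations are undefined. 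The relation above excludes it when $\theta^*\neq0$. When $\theta^*=0$, the same relation gives $\hat\theta=0=\theta^*$ directly.
\end{itemize}

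Finally, the limit interchange you flag as the main obstacle does not arise in the paper's formulation. There, the $t=\infty$ correlations are defined from $\hat{\mathcal Z}_*$ at the limit parameter, with integration variables $\phi_{jk}$ over $\mathbb{R}^{MS}_+$ rather than an expectation over $\bm W^{(t)}$. What you need instead is only continuity of the cumulants in $\tilde\theta$ near $\hat\theta$, or the hypothesis posed directly at $\hat\theta$.
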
 

This theorem implies that when the simulated annealing training process of a FCDNN reaches optimality, for this class of exponential family data, the limits of the characteristic parameters of the feature layer output system are equal to the fixed point of the characteristic parameters of the data under the RG method. The network is equivalent to the RG method for continuous fields, and thus can effectively extract the macroscopic features in the physical sense of this continuously distributed data.

\subsection{Related works}

For the classification and enumeration of related works in the field of interpretability research, especially the introduction of research works also based on the RG method, we refer to Section 1.2 of \cite{ref13}. In this section, we introduce some new research works. 

In recent years, in the field of large language models (LLMs), there are some interpretability works. In 2023, OpenAI generated the natural language descriptions of the neurons in GPT-2 model by GPT-4 \cite{ref28}. In 2024, Anthropic used the sparse auto-encoder for dictionary learning to achieve the visualization of the features of Claude model \cite{ref29}. Benefiting from the reasoning capabilities of large models, Deepseek R1 \cite{ref30} employed a chain of Thought (CoT) approach to enhance model transparency and foster human trust. Building on this, OpenAI proposed specific methods for monitoring the CoT, identifying ambiguous expressions within it to further improve the interpretability of results \cite{ref31}.

Recently, the ``emergence" or ``aha moment" phenomena exhibited by large models, which differ from small models, have attracted attention. Research \cite{ref58} explained that reinforcement learning (RL) enables large models to resemble human reasoning processes: they first grasp underlying executions for problems and then explore higher-level planning. Under this hierarchical structure, the shift of the model's learning focus toward high-level strategies accounts for the occurrence of emergence phenomena. A recent study by FAIR \cite{ref59}, by establishing a mathematical framework, divided the training process of neural network models into three stages, demonstrating that the model initially engages in memorization, then learns independent features, and finally extracts interactive features. These studies confirm the existence of data features extracted by DNNs and provide an intuitive depiction of the feature extraction process, laying the foundation for our theory.

The first noteworthy work which introduced the RG method into the interpretability research is \cite{ref34}. In this paper the authors established a conceptual corresponding framework between the RBM and the variational RG. This was the first work that concretized the correspondence relationship based on the intuitive similarity between DNN and RG through mathematical expression. For a comparison between the correspondence framework we established and \cite{ref34} as well as its subsequent related studies, we refer to \cite{ref13}.

\subsection{Structure of the paper}
The rest of this paper is organized as follows. In Section 2, we introduce the relevant background knowledge for our work, including the DNN architecture used, the loss function, the training algorithm and its convergence lemma, as well as the RG method for continuous fields. In Section 3, we present the specific form of the data distribution employed, detail the methods for establishing both canonical and non-canonical renormalization processes on this distribution, and discuss the conclusions derived from these processes. In Section 4, we summarize the findings of this paper and outline potential directions for future research.

\section{Backgrounds}
\subsection{DNN: structure, data and training algorithm}
A DNN can be represented as a mapping $F$ with parameters $w$ from the input data space $\Omega$ to the target space $Y$. For a data point $x \in \Omega$, denote $\hat{y}(x,\bm{W})=F(x,\bm{W})$ as the output of the feature layer in the network. For the non-canonical renormalization process realized by the training of DNNs, we consider FCDNNs. The FCDNN is simple to analyze but can be seen as the prototype of other complicated network structures, therefore, it is representative. The sketch  of a FCDNN is shown in the following Figure 2:

\begin{figure}[H]
	\centering
	\includegraphics[width=0.4\textwidth]{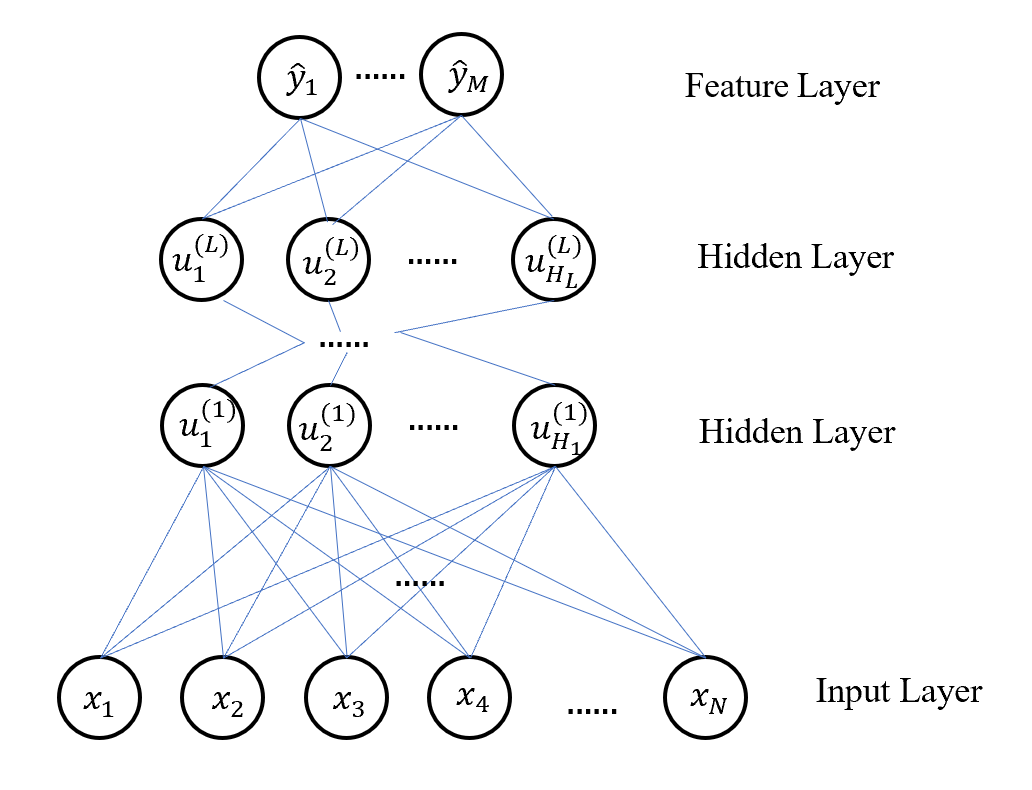}\\
	\caption{Fully Connected Network Structure}\label{structure}
\end{figure}

The input of this neural network is $\bm{x}=(x_1, x_2, \cdots, x_N) \in {\mathbb{R}}^N$, the feature layer output is $\bm{\hat{y}}=({\hat{y}}_{1}, {\hat{y}}_{2}, \cdots, {\hat{y}}_{M}) \in {\mathbb{R}}^M$. Between the input layer and the feature layer output result, there are $L$ hidden layers. The forward propagation process of the network can be referred to the description in \cite{ref13} Section 4. In practice, the commonly used expression of the loss function is:
\begin{align}
\widehat{L}(\bm{W}^{(t)})=\frac{1}{2S}\sum_{s=1}^S ||\hat{y}(\bm{W}^{(t)},{\bm{x}}^{(s)})-{\bm{y}}^{(s)}||_2^2.
\end{align}
Here $\bm{W}^{(t)}$ denotes the parameters at time $t$ of training in the network. The labelled training dataset is $\{({\bm{x}}^{(s)},{\bm{y}}^{(s)})\}_{s=1}^{S}$, where ${\bm{y}}^{(s)}$ are the target feature layer outputs. When $\bm{W}^*$ denotes one of the optimal values of the network parameters, we have $\hat{y}(\bm{W}^{*},{\bm{x}}^{(s)})={\bm{y}}^{(s)}, s=1,2,\cdots,S$ on the training data. Similar to \cite{ref13}, we adopt the following loss function with the regularization term (we omit the data input to the network since this has no impact on analysis):
\begin{equation}
L(\bm{W}^{(t)})=\frac{1}{2}||\hat{y}(\bm{W}^{(t)})-\hat{y}({\bm{W}}^*)||_2^2+{\tau}||\bm{W}^{(t)}-{\bm{W}}^*||^2_2.
\end{equation}
In this loss function, the second term is the regularization term, coefficient ${\tau}>0$ is a hyper-parameter. This term must be introduced since we consider the simulated annealing algorithm. For more details we refer to \cite{ref13} Remark 2.2.

The following is the form of the distribution density function of exponential family ($x_i \in \mathbb{R}, i = 1,2, \cdots, N, {\theta}_r \in \mathbb{R}, r = 1,2, \cdots, R$):
\begin{align}
\mathbb{P}(x_1,x_2,\cdot\cdot{\cdot},x_N;{\theta}_1,{\theta}_2,\cdot\cdot{\cdot},{\theta}_R)=C(\bm{\theta})\mathrm{e}^{\sum_{r=1}^{R}{\eta}_r(\bm{\theta})T_r(\bm{x})}h(\bm{x}).
\end{align}
Here, $ R \in \mathbb{N} $, $ \bm{\theta}{\triangleq}({\theta}_1,{\theta}_2,\cdot\cdot{\cdot},{\theta}_R) $ are the parameters in the distribution, $ C(\bm{\theta})>0,{\eta}_r(\bm{\theta}),r=1,\cdot\cdot{\cdot},R $ are functions of the parameters, $ h(\bm{x})>0,  T_r(\bm{x}) $ are functions of $ \bm{x} $. In statistics, to make $ T_r(\bm{x}) $ a sufficient and complete statistic for $ \bm{x} $, the following natural form of the exponential family distribution is usually adopted. It can be derived from the general form of the exponential family distribution mentioned above through parameter transformation. In this paper, we adopt the natural form of the exponential family distribution for analysis:
\begin{align}
\mathbb{P}(x_1,x_2,\cdot\cdot{\cdot},x_N;{\theta}_1,{\theta}_2,\cdot\cdot{\cdot},{\theta}_R)=C^*(\bm{\theta})\mathrm{e}^{\sum_{r=1}^{R}{\theta}_rT_r(\bm{x})}h(\bm{x}).
\end{align}

We consider using simulated annealing as the training algorithm of the network parameters. The continuation of the time-discrete simulated annealing algorithm is the following stochastic differential equation (according to \cite{ref49}):
\begin{equation} \label{eq:sde}
d\mathbf{W}^{(t)}=-{\nabla}L(\bm{W}^{(t)})dt+\sqrt{{\eta}_t}dB^{(t)},t{\geq}0.
\end{equation}
Here, $B^{(t)}$ is the standard Brownian motion with the same dimension as parameter $\bm{W}^{(t)}$, ${\eta}_t$ is a function of time $t$ called the learning rate. For the initial condition, we choose $\bm{W}^{(0)}$ follows a fixed distribution $P_0(\bm{W}^{(0)})$.

Holley and Stroock et al \cite{ref50, ref51} discussed the convergence in probability of this algorithm. In our analysis we use their conclusion, similar to theorem 2.2 in \cite{ref51}, we have:
\begin{lemma}\label{lemma1}
Suppose $\bm{W}^{(t)}$ satisfies equation (\ref{eq:sde}), then there exists a choice of the concrete form of the learning rate ${\eta_t}$, under which we have for any ${\delta}>0$:
\begin{align}
\mathbb{P}(L(\bm{W}^{(t)}){\geq}{\delta}){\leq}{\epsilon}({\delta},t),
\end{align}
where ${\epsilon}({\delta},t){\rightarrow}0$ when $t{\rightarrow}{\infty}$.
\end{lemma}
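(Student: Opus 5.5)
The lemma is a restatement of the Holley--Kusuoka--Stroock convergence theorem for simulated annealing, specialized to our loss $L$. The plan is to check that $L$ satisfies the hypotheses of Theorem 2.2 in \cite{ref51}, pick the logarithmic cooling schedule that theorem prescribes, and translate its conclusion into the displayed bound.

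\textbf{Step 1 (hypotheses on the potential).} The regularization term makes $L$ a confining potential. Since the first term of $L$ is nonnegative,
\begin{align*}
L(\bm{W}) \geq \tau\|\bm{W}-\bm{W}^*\|_2^2 .
\end{align*}
Hence $L \to \infty$ as $\|\bm{W}\|\to\infty$. For every $\eta>0$ we also get
\begin{align*}
\int \mathrm{e}^{-2L/\eta}\,d\bm{W} < \infty,
\end{align*}
so the Gibbs measures $\mu_\eta \propto \mathrm{e}^{-2L/\eta}$ are well defined. The network output $\hat{y}$ is a composition of affine maps and activation functions. Assuming smooth activations with polynomially bounded derivatives, $L$ is $C^2$, and $\nabla L$ is locally Lipschitz. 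The quadratic term also gives a drift condition at infinity: $\langle \nabla L(\bm{W}), \bm{W}-\bm{W}^*\rangle \geq c\|\bm{W}-\bm{W}^*\|^2 - C$. This holds provided the gradient of the first term grows at most like $\|\bm{W}\|$. If it grows faster, I would truncate or assume bounded activations, as in \cite{ref13}. With this drift condition the SDE \eqref{eq:sde} has a unique non-explosive strong solution for any initial law $P_0$ with finite second moment. Finally, $\min L = 0$, attained at $\bm{W}^*$.

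\textbf{Step 2 (cooling schedule and convergence).} Let $m$ be the critical depth of $L$, i.e.\ the largest barrier height separating a local minimum from a global one. The learning rate is chosen as
\begin{align*}
\eta_t = \frac{c}{\log(2+t)}, \qquad c > m,
\end{align*}
following \cite{ref50,ref51}. Their theorem then says the law of $\bm{W}^{(t)}$ concentrates on the set of global minima: for every $\delta>0$, $\mathbb{P}(L(\bm{W}^{(t)})\geq\delta)\to0$. The theorem is stated on a compact manifold, so this needs a localization argument. The approach is to compare the law of $\bm{W}^{(t)}$ with $\mu_{\eta_t}$ in relative entropy or $L^2(\mu_{\eta_t})$. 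The log-Sobolev (or spectral gap) constant of $\mu_\eta$ behaves like $\mathrm{e}^{-2m/\eta}$, and this is what makes the choice $c>m$ work.

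\textbf{Step 3 (defining $\epsilon$).} Set
\begin{align*}
\epsilon(\delta,t) = \mu_{\eta_t}(L\geq\delta) + \|\text{law}(\bm{W}^{(t)})-\mu_{\eta_t}\|_{TV}.
\end{align*}
The first term tends to $0$ by a Laplace-type estimate. It is bounded by $\mathrm{e}^{-(\delta-\delta/2)2/\eta_t}$ times a polynomial factor coming from the Lebesgue measure of the sublevel sets, which the quadratic lower bound on $L$ controls. The second term tends to $0$ by Step 2 together with Pinsker's inequality.

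\textbf{Main obstacle.} The hardest step is the non-compactness of $\mathbb{R}^{\dim \bm{W}}$. We need uniform-in-time moment bounds for $\bm{W}^{(t)}$ and a log-Sobolev inequality for $\mu_\eta$ on the whole space with the sharp constant $\mathrm{e}^{-2m/\eta}$. I would first try the Holley--Kusuoka--Stroock approach: use the quadratic growth from the $\tau$-term to get a Lyapunov function $\|\bm{W}-\bm{W}^*\|^2$, then apply bounded-perturbation arguments to the Gaussian reference measure.
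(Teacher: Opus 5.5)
Your plan is the paper's own route. The paper gives no argument for this lemma beyond invoking Holley--Stroock (Theorem 2.2 of \cite{ref51}, together with \cite{ref50}), so your Steps 1--3 unpack that citation. The points you flag (non-compact parameter space, well-posedness and moment bounds for the SDE) are not addressed in the paper at all.

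There is one concrete slip in Step 2. With your normalization, the frozen-$\eta$ generator is $\frac{\eta}{2}\Delta-\nabla L\cdot\nabla$. So $\mu_\eta\propto\mathrm{e}^{-2L/\eta}$ is the Gibbs measure at temperature $\eta/2$, and its spectral gap and log-Sobolev constant behave like $\mathrm{e}^{-2m/\eta}$, as you say.
\begin{itemize}
\item Plugging in $\eta_t=c/\log(2+t)$ gives a gap of order $(2+t)^{-2m/c}$.
\item The target measure $\mu_{\eta_t}$ moves at rate $\frac{d}{dt}(2/\eta_t)=\frac{2}{c(2+t)}$.
\item The Holley--Stroock energy estimate closes only when the gap dominates this rate, i.e.\ when $2m/c<1$.
\end{itemize}
So you need $c>2m$, not $c>m$. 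For $m<c<2m$ the effective temperature constant $c/2$ is below the critical depth, and the argument breaks down. Since the lemma only asserts that some admissible $\eta_t$ exists, taking $c>2m$ fixes this.

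Two smaller remarks.
\begin{itemize}
\item In Step 1, your fallback of assuming bounded activations does not give linear growth of the gradient of the data-fit term. Derivatives with respect to early-layer weights carry products of the later-layer weight matrices. With more than one hidden layer they therefore generally grow polynomially in $\|\bm{W}\|$, with degree up to the number of hidden layers. Moreover, even linear growth $a\|\bm{W}-\bm{W}^*\|+b$ yields your drift inequality only when $a<2\tau$. So the Lyapunov argument needs extra assumptions or a different test function.
\item In Step 3 you ask for more than is needed. Write $f_t$ for the density of $\mathrm{law}(\bm{W}^{(t)})$ with respect to $\mu_{\eta_t}$. Cauchy--Schwarz gives $\mathbb{P}(L(\bm{W}^{(t)})\geq\delta)\leq\|f_t\|_{L^2(\mu_{\eta_t})}\,\mu_{\eta_t}(L\geq\delta)^{1/2}$. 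A uniform-in-$t$ bound on $\|f_t\|_{L^2(\mu_{\eta_t})}$ plus your Laplace estimate then already proves the lemma. That bound is exactly what an $L^2(\mu_{\eta_t})$ energy estimate controls, and it is weaker than total-variation convergence.
\end{itemize}
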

Similar to \cite{ref13}, this lemma is the main conclusion we used in our proof. Also, as in the case of the Ising model, our proof does not actually deal with the regularization term in the loss function and can be generalized to the more commonly used method in practice \textemdash stochastic gradient descent (SGD) algorithm (\cite{ref13} Remark 3.2).

\subsection{Renormalization group method: on continuous field}
\subsubsection{Renormalization group based on scaling transformation}

For a basic introduction to the renormalization group, refer to \cite{ref13} Section 2.2. The biggest difference from the discrete case of coarse-graining RG methods is that the RG for continuous fields utilizes scale transformation and RGEs. RGEs can take various forms based on different criteria. For instance, corresponding RGEs can be considered for the system's correlation functions, connected correlation functions, relative entropy, and among others. For the same system, the results obtained from RGEs established under different frameworks are consistent since they all keep the state functions of the system invariant to ensure the corresponding macro-scale system or thermodynamic system remains consistent under the RG transformation. Below, we present the basic framework of the RG method based on the RGEs satisfied by the system's multipoint (connected) correlation functions. Relevant references include \cite{ref54, ref55}.

First, in physics, normally in order to do the RG calculation on the system, there are some constraints on the system's Hamiltonian $ \mathcal{H}(\sigma) $ known as the Landau-Ginzburg-Wilson Hamiltonian (\cite{ref54} Section 9.1.2). The main purpose of this assumption is to endow the system with a kind of translational invariance.

In this paper we consider the RGEs on (connected) correlation functions. For a system described by a continuous field, its correlation function is defined by the partition function in an external variable field $H(x)$ (see \cite{ref54} Section 9.1.1 formula 2). We denote this partition function by $\hat{\mathcal{Z}}(H)$ here to avoid confusion with the original partition function $\mathcal{Z}$. It can be seen as the Laplace transform of the Boltzmann factor $ \mathrm{e}^{-\mathcal{H}} $ and performs as the generating function of correlation functions. Therefore, the correlation functions are defined via the functional derivative of $\hat{\mathcal{Z}}(H)$ and can be seen in the last formula of \cite{ref54} Section 9.1.1. For the connected correlation functions, their generating function is $\mathcal{W}(H)=\ln \hat{\mathcal{Z}}(H)$, and the definition is given by formula (9.5) of \cite{ref54}.

Then in the RG method for continuous fields, we introduce a scale parameter $\lambda$ and use it to construct the corresponding renormalization transformation. When $\lambda = 1$, it corresponds to the original model unchanged:
\begin{align}
\mathcal{H}_{\lambda=1}(\sigma){\triangleq}\mathcal{H}(\sigma).
\end{align}
When $\lambda > 1$, the renormalization transformation is denoted by:
\begin{align}
\mathcal{H}(\sigma){\mapsto}\mathcal{H}_{\lambda}(\sigma).
\end{align}
One specific method to construct a renormalization transformation is:
\begin{equation} \label{eq:rgtransform}
\mathcal{H}_{\lambda}(\sigma(x))=\mathcal{H}_{\lambda=1}(Z(\lambda)^{\frac{1}{2}}\sigma(\frac{x}{\lambda})).
\end{equation}

The RGE satisfied by the n-point connected correlation functions of the system is (see \cite{ref54} equation (9.9)):
\begin{equation}
\begin{split}
&{\mathcal{G}}_{\lambda}^{(n)}(\sigma(x_1),\sigma(x_2),{\cdot}{\cdot}{\cdot},\sigma(x_n))-Z^{-\frac{n}{2}}(\lambda){\mathcal{G}}^{(n)}(\sigma({\lambda}x_1),\sigma({\lambda}x_2),{\cdot}{\cdot}{\cdot},\sigma({\lambda}x_n))\\&={\mathcal{R}}_{\lambda}^{(n)}(\sigma(x_1),\sigma(x_2),{\cdot}{\cdot}{\cdot},\sigma(x_n)).
\end{split}
\end{equation}
In the above equations, the term $ {\mathcal{G}}_{\lambda}^{(n)} $ represents the n-point connected correlation function of the system after the renormalization transformation, i.e., when the Hamiltonian is $ \mathcal{H}_{\lambda}(\sigma) $. The Hamiltonian $ \mathcal{H}_{\lambda}(\sigma) $ is also referred to as the effective Hamiltonian at scale $\lambda$. $Z(\lambda)$ can be chosen arbitrarily as long as it ensures the validity of the RGE. The term $ {\mathcal{R}}_{{\lambda}}^{(n)} $ is the remainder of the equation, which satisfies the condition that it is a rapidly decreasing function of $\lambda$ as $ \lambda{\rightarrow}{\infty} $. Different forms of $Z(\lambda)$ and $ {\mathcal{R}}_{{\lambda}}^{(n)} $ correspond to different renormalization transformations. The RGE with the same form but $ {\mathcal{G}}_{\lambda}^{(n)} $ replaced by the n-point correlation function also holds.

In particular, in the case of the one-dimensional Ising model, the above RG framework is consistent with the real-space RG, and calculations based on the RGEs yield the same fixed point for the coupling constant. Refer to \cite{ref56} and \cite{ref57}.

\section{Method and results}

In this section we introduce our method and results concretely, following the three steps introduced in Section 1. First for the canonical RG part, we describe the input data of a FCDNN under the field theory framework and then give its RGEs. Then for the non-canonical RG done by training process of FCDNN, we describe it with the same framework using the RGEs with the same form. Finally we prove that the canonical RG and FCDNN training both derive the same limit of the parameters in the data distribution.
\subsection{Canonical renormalization}
\subsubsection{Data and transformation}
First we introduce the concrete form of the input data we consider for a FCDNN. Specifically, we consider the distribution density function of the exponential family introduced in Section 2.1:
\begin{align}
\mathbb{P}(\bm{x};\bm{\theta})=C(\bm{\theta})\mathrm{e}^{-\sum_{r=1}^{R}\theta_{r}T_{r}(\bm{x})}h(\bm{x}), \quad \bm{x}=(x_1,x_2,\cdots,x_N)\in\mathbb{R}^N.
\end{align}
Here, a negative sign is added in the exponent to align with the form of the Gibbs measure. We select
\begin{align}
T_{r}(\bm{x})=\left(\sum_{j,k=1}^{N}a_{jk}x_jx_k\right)^r,
\end{align}
that is, if the probability density of the system is interpreted in the form of a Gibbs distribution, the energy function of the system is a polynomial in $x_i, i=1,\cdots,N$ containing only even-degree terms, corresponding to a physical system with only even-order interactions. Here, $a_{jk}\in\mathbb{R},\ j,k=1,\cdots,N$ are constants such that the matrix $A=(a_{jk})$ is symmetric and positive definite. Additionally, we take:
\begin{align}
h(\bm{x})\equiv 1,
\end{align}
and $C(\bm{\theta})$ is the inverse number of the partition function:
\begin{align}
C(\bm{\theta})=\frac{1}{\mathcal{Z(\theta)}}, \quad \mathcal{Z(\theta)}=\int \mathrm{e}^{-\sum_{r=1}^{R}\theta_{r}T_{r}(\bm{x})}d\mathbf{x}=\int \mathrm{e}^{-\sum_{r=1}^{R}\theta_{r}\left(\sum_{j,k=1}^{N}a_{jk}x_jx_k\right)^r}d\mathbf{x}.
\end{align}
Thus, the probability density function of the considered data is:
\begin{align}
\mathbb{P}(\bm{x};\bm{\theta})=\frac{1}{\mathcal{Z(\theta)}}\mathrm{e}^{-\sum_{r=1}^{R}\theta_{r}\left(\sum_{j,k=1}^{N}a_{jk}x_jx_k\right)^r}.
\end{align}
Viewing this exponential family distribution as a physical model, the parameters $\theta_1,\theta_2,\cdots,\theta_R$ in the exponential family are analogous to the coupling constants in the one-dimensional Ising model, representing the strengths of the $2,4,\cdots,2R$-order interactions in the system, respectively. To ensure integrability, it is also required here that $\theta_1,\theta_2,\cdots,\theta_R>0$.

After determining the form of the model, according to the discussion of the canonical RG in Section 2, to apply the RG method, it is necessary to transform the Hamiltonian in the model into the form of a Landau-Ginzburg-Wilson Hamiltonian. In the present discussion, since a polynomial function is chosen, the regularity conditions required for the function are naturally satisfied; we only need to transform the model into a translationally invariant form. Here, we consider using a coordinate transformation. First, due to the symmetric positive definiteness of the matrix $A = (a_{jk})$, its eigenvalues $\mu_1, \mu_2, \cdots, \mu_N \geq 0$, and it can be orthogonally diagonalized:
\begin{align}
M = C^T A C,
\end{align}
\begin{align}
M=\left(
\begin{matrix}
     {\mu}_1 & 0 & 0 & 0 \\
     0 & {\mu}_2 & 0 & 0 \\
     0 & 0 & \cdot\cdot\cdot & 0 \\
     0 & 0 & 0 & {\mu}_N
\end{matrix}
\right).
\end{align}
Then, if we set:
\begin{align}
X = C Y, \quad X = (x_1, x_2, \cdots, x_N), \quad Y = (y_1, y_2, \cdots, y_N),
\end{align}
we have:
\begin{align}
X^T A X = Y^T C^T A C Y = Y^T M Y,
\end{align}
thus transforming the quadratic form into its standard form:
\begin{align}
\sum_{j,k=1}^{N} a_{jk} x_j x_k = \sum_{i=1}^N \mu_i y_i^2.
\end{align}
Furthermore, to ensure the aforementioned translational invariance, we can perform an additional scaling transformation. Define the matrix (where if any $\mu_i = 0$, the corresponding diagonal element is set to $0$):
\begin{align}
{M}^{'}=\left(
\begin{matrix}
     \frac{1}{\sqrt{{\mu}_1}} & 0 & 0 & 0 \\
     0 & \frac{1}{\sqrt{{\mu}_2}} & 0 & 0 \\
     0 & 0 & \cdot\cdot\cdot & 0 \\
     0 & 0 & 0 & \frac{1}{\sqrt{{\mu}_N}}
\end{matrix}
\right),
\end{align}
and take:
\begin{align}
X = C M' Y,
\end{align}
then the Hamiltonian can be transformed into:
\begin{align}
\mathcal{H} = \sum_{r=1}^{R} \theta_r \left( \sum_{j,k=1}^{N} a_{jk} x_j x_k \right)^r = \sum_{r=1}^{R} \theta_r \left( \sum_{i=1}^N y_i^2 \right)^r.
\end{align}
This is exactly the form of a Landau-Ginzburg-Wilson Hamiltonian. In the following, we will analyse based on this form. For consistency in the subsequent discussion, the transformed variables are still denoted by $x_i$, and the probability density function of the data distribution is written as:
\begin{align}
\mathbb{P}(x)=C({\theta})\mathrm{e}^{-{\sum}_{r=1}^{R}{\theta}_r({\sum}_{i=1}^Nx_i^2)^r},
\end{align}
where $ C({\theta}) $ is a constant equals to the inverse number of partition function. In the following, it will remain invariant and we only perform renormalization on the exponential part.

\subsubsection{Description of the canonical renormalization on exponential family}

In this section we realize Step 1. We regard the input data set as a statistical physical system, this means that we write the distribution density function of the data as the Gibbs form:
\begin{align}
\mathbb{P}(x, \theta)= \frac{1}{\mathcal{Z}(\theta)}\mathrm{e}^{-\mathcal{H}(x,\theta)},
\end{align}
where $\theta$ denotes the characteristic parameter, $\mathcal{H}(x,\theta)$ is the Hamiltonian, $\mathcal{Z}(\theta)$ is the partition function. After transformation in last section, the data probability density function is:
\begin{align}
\mathbb{P}(x) = \frac{1}{\mathcal{Z(\theta)}} \mathrm{e}^{-\sum_{r=1}^{R} \theta_r \left( \sum_{i=1}^N x_i^2 \right)^r}.
\end{align}
According to Section 1.2, the data distribution is on ${\mathbb{R}}^N$ and in the framework of continuous field, the field corresponding with this data is then defined as:
\begin{equation}
\begin{split}
{\sigma}:&\mathbb{R}^N {\rightarrow} \mathbb{R}^N\\
&{\sigma}_i(x)=x_i, i=1, 2, \cdot\cdot\cdot, N.
\end{split}
\end{equation}
The Hamiltonian of the system is:
\begin{align}
\mathcal{H}(x, \theta)= \sum_{r=1}^{R} \theta_r \left( \sum_{i=1}^N x_i^2 \right)^r.
\end{align}
Based on the above representation, we can derive the renormalization transformation and the corresponding RGE for data following this class of exponential family distributions, as discussed in Section 2. First, the renormalization transformation is given by (\ref{eq:rgtransform}). Substituting the specific form of the field $\sigma(x)$ chosen above, the renormalization transformation for the data distribution can be obtained as:
\begin{equation} \label{eq:rgtrans}
x_i \mapsto Z^{\frac{1}{2}}(\lambda) \frac{x_i}{\lambda}, \quad i = 1, 2, \cdots, N.
\end{equation}
Given the specific form of the Hamiltonian here, the renormalization transformation coefficients for $x$ can be absorbed into the parameters $\theta$:
\begin{equation}
\begin{split}
\mathcal{H}_{\lambda}(\sigma(x))&={\sum}_{r=1}^R{\theta}_r({\sum}_{i=1}^N(Z(\lambda)^{\frac{1}{2}}\frac{x_i}{\lambda})^2)^r\\&={\sum}_{r=1}^R({\theta}_rZ(\lambda)^r\frac{1}{{\lambda}^{2r}})({\sum}_{i=1}^N x_i^2)^r,
\end{split}
\end{equation}
thus, the renormalization transformation for the parameters can be directly written as:
\begin{align}
\theta_r \mapsto \theta_r \cdot Z(\lambda)^r \cdot \frac{1}{\lambda^{2r}} \triangleq \theta_r^{\lambda}.
\end{align}
When the scale parameter $\lambda \rightarrow \infty$, the corresponding parameters represent the fixed points of the renormalization transformation:
\begin{align}
\theta_r^* \triangleq \theta_r \cdot \lim_{\lambda \rightarrow \infty} \left( Z(\lambda)^r \cdot \frac{1}{\lambda^{2r}} \right).
\end{align}
Here, based on the specific form of the field $\sigma(x)$, the $n$-point correlation function should be evaluated at $n$ points $x^{(1)}, x^{(2)}, \cdots, x^{(n)}$ in the field's definition space $\mathbb{R}^N$. For notational convenience, we use multi-indices:
\begin{align}
x^{(i)} = (x^{(i)}_1, x^{(i)}_2, \cdots, x^{(i)}_N), \quad \vec{\kappa} = (\kappa_1, \kappa_2, \cdots, \kappa_N), \quad (x^{(i)})^{\vec{\kappa}} \triangleq (x^{(i)}_1)^{\kappa_1} (x^{(i)}_2)^{\kappa_2} \cdots (x^{(i)}_N)^{\kappa_N}.
\end{align}
Then, the $n$-point correlation function is:
\begin{equation}
\begin{split}
&{\mathcal{G}}^{(n)}(\sigma(x^{(1)}),\sigma(x^{(2)}),{\cdot}{\cdot}{\cdot},\sigma(x^{(n)});{\vec{\kappa}}_1,{\vec{\kappa}}_2,{\cdot}{\cdot}{\cdot},{\vec{\kappa}}_n)\\&=\frac{1}{\mathcal{Z}}{\int}(x^{(1)})^{\vec{\kappa}_1}(x^{(2)})^{\vec{\kappa}_2}{\cdot}{\cdot}{\cdot}(x^{(n)})^{\vec{\kappa}_n}\mathrm{e}^{-{\sum}_{r=1}^R{\theta}_r({\sum}_{i=1}^Nx_i^2)^r}dx_1dx_2{\cdot}{\cdot}{\cdot}dx_N.
\end{split}
\end{equation}
Similarly, we can write the correlation functions $\mathcal{G}_{\lambda}^{(n)}(\sigma(x^{(1)}), \sigma(x^{(2)}), \cdots, \sigma(x^{(n)}))$ and \\ $\mathcal{G}^{(n)}(\sigma(\lambda x^{(1)}), \sigma(\lambda x^{(2)}), \cdots, \sigma(\lambda x^{(n)}))$. We then obtain the specific form of the RGE as:
\begin{equation}
\begin{split}
&\frac{1}{\mathcal{Z}_{\lambda}}{\int}(x^{(1)})^{\vec{\kappa}_1}(x^{(2)})^{\vec{\kappa}_2}{\cdot}{\cdot}{\cdot}(x^{(n)})^{\vec{\kappa}_n}\mathrm{e}^{-{\sum}_{r=1}^R{\theta}_r(Z(\lambda)\frac{1}{{\lambda}^2})^r({\sum}_{i=1}^Nx_i^2)^r}dx_1dx_2{\cdot}{\cdot}{\cdot}dx_N\\&-Z^{-\frac{n}{2}}(\lambda)\frac{1}{\mathcal{Z}}{\int}{\lambda}^{{\sum}_{l=1}^n|\vec{\kappa}_l|}{\cdot}(x^{(1)})^{\vec{\kappa}_1}(x^{(2)})^{\vec{\kappa}_2}{\cdot}{\cdot}{\cdot}(x^{(n)})^{\vec{\kappa}_n}\mathrm{e}^{-{\sum}_{r=1}^R{\theta}_r({\sum}_{i=1}^Nx_i^2)^r}dx_1dx_2{\cdot}{\cdot}{\cdot}dx_N\\&={\mathcal{R}}_{\lambda}^{(n)}(x^{(1)},x^{(2)},{\cdot}{\cdot}{\cdot},x^{(n)}).
\end{split}
\end{equation}
Here, $\mathcal{R}_{\lambda}^{(n)}(x^{(1)}, x^{(2)}, \cdots, x^{(n)})$ is a rapidly decreasing function with respect to the scale transformation parameter $\lambda$.

When the scale parameter $\lambda \rightarrow \infty$, combined with the condition that the partition function remains invariant $\mathcal{Z}_{\lambda} = \mathcal{Z}$, we obtain the RGE satisfied by the fixed point:
\begin{equation}
\begin{split}
&{\int}(x^{(1)})^{\vec{\kappa}_1}(x^{(2)})^{\vec{\kappa}_2}{\cdot}{\cdot}{\cdot}(x^{(n)})^{\vec{\kappa}_n}\mathrm{e}^{-{\sum}_{r=1}^R{\theta}_r^*({\sum}_{i=1}^Nx_i^2)^r}dx_1dx_2{\cdot}{\cdot}{\cdot}dx_N\\&-{\lim}_{{\lambda}{\rightarrow}{\infty}}(Z^{-\frac{n}{2}}(\lambda){\lambda}^{{\sum}_{l=1}^n|\vec{\kappa}_l|}){\int}{\cdot}(x^{(1)})^{\vec{\kappa}_1}(x^{(2)})^{\vec{\kappa}_2}{\cdot}{\cdot}{\cdot}(x^{(n)})^{\vec{\kappa}_n}\mathrm{e}^{-{\sum}_{r=1}^R{\theta}_r({\sum}_{i=1}^Nx_i^2)^r}dx_1dx_2{\cdot}{\cdot}{\cdot}dx_N=0.
\end{split}
\end{equation}
Since the RGE cannot be solved explicitly, we need to analyze it to draw the necessary conclusions. Therefore, in the following discussion, to ensure the meaningfulness of our analysis, we assume that the above RG method can be applied to this specific data model we have selected and that non-trivial parameter fixed points exist.

\subsection{Non-canonical renormalization and the correspondence relationship}

\subsubsection{Description of the non-canonical renormalization on exponential family}

In this section we realize the Step 2. Similar to the previous canonical situation, to interpret the DNN's processing of data as a renormalization transformation, we first need to describe the feature layer output of a DNN using the language of field theory. In this context, the continuous field can naturally be defined as the feature layer output of the network via the forward propagation computation:
\begin{equation}
\begin{split}
{\phi}:&\mathbb{R}^N {\rightarrow} \mathbb{R}^M\\
&{\phi}_j(x)=F_j(x;\bm{W}), j=1, 2, \cdot\cdot\cdot, M.
\end{split}
\end{equation}
Here, $F_j(x; \bm{W})$ represents the expression of the $j$-th output of the feature layer, and $\bm{W}$ denotes the parameters. We still denote the training data set of the network as $X=\{x_k\}_{k=1}^S$. Instead of integrating over the whole $\mathbb{R}^N$ in the unified form of Hamiltonian, the energy of the feature layer output system is just concentrated on the sample points in the training process. The Hamiltonian for the neural network's feature layer output is defined as:
\begin{equation}
\begin{split}
\tilde{\mathcal{H}}({\phi},{\theta},M)=&{\sum}_{k=1}^S {\sum}_{r=1}^R{\tilde{\theta}}_r^t({\sum}_{j=1}^M{\phi}_j(x_k,\bm{W}^{(t)})^2)^r\\&={\sum}_{k=1}^S {\sum}_{r=1}^R{\tilde{\theta}}_r^t({\sum}_{j=1}^MF_j(x_k;\bm{W}^{(t)})^2)^r.
\end{split}
\end{equation}
This definition originates from the principle of the RG method: the system before and after the renormalization transformation must have Hamiltonians of exactly the same form. Therefore, when the neural network training process is regarded as a non-canonical renormalization process, the feature layer output Hamiltonian has exactly the same expression as $\mathcal{H}(x, \theta)$ with $x$ replaced by $F(x, \bm{W}^{(t)})$ and $\theta$ replaced by $\tilde{\theta}_t$.

For greater clarity, we rewrite $\tilde{\mathcal{H}}({\phi},{\theta},M)=\tilde{\mathcal{H}}(\phi, \tilde{\theta}^t, \bm{W}^{(t)}, M) $ here. The characteristic parameters $\tilde{\theta}_r^t$ are functions of training time $t$ satisfying $\tilde{\theta}_r^t>0$ and $\tilde{\theta}_r^0 ={\theta}_r, \forall r$. The corresponding partition function is (note that ${\phi}_j(x)=F_j(x;\bm{W})$):
\begin{equation}  \label{eq:ncHamilton}
\begin{split}
\mathcal{Z}_t&=\mathbb{E}_{\bm{W}^{(t)}}\big[\mathrm{e}^{Ng(t)-\tilde{\mathcal{H}}(\phi, \tilde{\theta}^t, \bm{W}^{(t)}, M)}\big]\\&=\mathbb{E}_{\bm{W}^{(t)}}\big[\mathrm{e}^{Ng(t)-{\sum}_{k=1}^S {\sum}_{r=1}^R{\tilde{\theta}}_r^t({\sum}_{j=1}^M{\phi}_j(x_k;\bm{W}^{(t)})^2)^r}\big].
\end{split}
\end{equation}
$g(t)$ is a non-stochastic function of $t$ with no singularity. It corresponds to the non-singular part of the system's Hamiltonian. The necessity of this term and detailed discussion can be found in \cite{ref13}. As discussed earlier, the field ${\phi}_j(x)=F_j(x;\bm{W})$ we consider here is completely determined by $\bm{W}$, which belongs to the finite-dimensional parameter space $\mathbb{R}^{N_p}$, where $N_p$ is the number of parameters in the DNN. Therefore, the non-strict integral ${\int}_{\Sigma} \cdot [d\sigma]$ in \cite{ref54} is replaced by ${\int}_{\mathbb{R}^{N_p}} {\cdot}\ d\phi$ in the case of non-canonical RG. Moreover, when we consider the simulated annealing training process, the integral is with respect to the image measure of $\bm{W}^{(t)}$, which is represented by $\mathbb{E}_{\bm{W}^{(t)}}[\cdot]$ in (\ref{eq:ncHamilton}).
From the perspective of a FCDNN as a continuous field defined on the space $\mathbb{R}^N$, the Hamiltonian expression above indicates that the network performs a non-canonical renormalization transformation on the input data. Time $t$ corresponds to the scale transformation parameter $\lambda$. 

Like RGE in the canonical RG, in the non-canonical RG there are criteria for the (connected) correlation functions. The theoretical basis we consider is that, in the theory of thermodynamics and statistical physics, if two statistical physical systems share identical numerical values, functional forms, and coupling constants in their partition functions, then their (connected) correlation functions must necessarily be identical or satisfy the same equations. Therefore, in the non-canonical RG, the connected correlation functions should also remain invariant. To be consistent with the RG method in \cite{ref54}, we consider the connected correlation functions here. The results are equivalent to the case of correlation functions. First, we give the definition of connected correlation functions of the DNN's feature layer output system. Similar to the definition in \cite{ref54}, we write
\begin{equation}
\hat{\mathcal{Z}}_*(\{{\mu}_{jk}\},\{{\tilde{\theta}}_r^*\}) \triangleq \int_{\mathbb{R}^{MS}_+} \mathrm{e}^{Ng_*-{\sum}_{k=1}^S {\sum}_{r=1}^R{\tilde{\theta}}_r^*({\sum}_{j=1}^M \phi_{jk}^2)^r-\sum_{j=1}^M \sum_{k=1}^S \mu_{jk}{\phi}_{jk}} \Pi_{jk}d\phi_{jk},
\end{equation}
where $g_* \triangleq {\lim}_{t{\rightarrow}{\infty}}g(t)$, $\mu_{jk} \in \mathbb{C}$, $\phi_{jk}={\phi}_j(x_k;\bm{W}^{*})$; we omit $\bm{W}^{*}$ here and express it in the subscript in $\hat{\mathcal{Z}}_*$, and the existence of $\tilde{\theta}_r^* \triangleq \lim_{t \rightarrow \infty} \tilde{\theta}_r^t$ will be discussed in the proof of Theorem 3.1. Here we consider the case of $t \rightarrow \infty$ since we only care about the limit of the characteristic parameters and also under this limit the remainder term of RGEs in both canonical and non-canonical RG vanish. $ \hat{\mathcal{Z}}_*(\{{\mu}_{jk}\},\{{\tilde{\theta}}_r^t\}) $ is the Laplace transform of
\begin{equation} 
f_*(\{{\phi}_{jk}\},\{{\tilde{\theta}}_r^*\})=\mathrm{e}^{Ng_*-{\sum}_{k=1}^S {\sum}_{r=1}^R{\tilde{\theta}}_r^*({\sum}_{j=1}^M{\phi}_{jk}^2)^r}. 
\end{equation}
Then with the multi-indices $ \vec{\kappa}^{(k)}=(\kappa_1^{(1)},\kappa_2^{(1)},\cdots, \kappa_M^{(1)})$ on each of the data points and vectors $\vec{\phi}_k=(\phi_{1k}, \cdots, \phi_{Mk}),\ k=1,2,\cdots,S $, the correlation functions are:
\begin{equation}
\begin{split}
&\tilde{\mathcal{G}}^*(x_1, \cdots, x_S; \vec{\kappa}^{(1)},\cdots,\vec{\kappa}^{(S)}) \triangleq \frac{\partial \hat{\mathcal{Z}}_*(\{{\mu}_{jk}\},\{{\tilde{\theta}}_r^*\}) }{{\partial}^{\kappa_1^{(1)}} \mu_{11} \cdots {\partial}^{\kappa_M^{(1)}} \mu_{M1} \cdots {\partial}^{\kappa_1^{(S)}} \mu_{1S} \cdots {\partial}^{\kappa_M^{(S)}}\mu_{MS}}|_{\mu_{jk}=0}\\&=\int_{\mathbb{R}^{MS}_+} {\vec{\phi}_1}^{\ \vec{\kappa}^{(1)}} \cdots {\vec{\phi}_S}^{\ \vec{\kappa}^{(S)}} \mathrm{e}^{Ng_*-{\sum}_{k=1}^S {\sum}_{r=1}^R{\tilde{\theta}}_r^*({\sum}_{j=1}^M{\phi}_{jk}^2)^r} \Pi_{jk}d\phi_{jk}.
\end{split}
\end{equation}
The connected correlation functions are:
\begin{equation}  \label{eq:noncanocorre}
\tilde{\mathcal{G}}_c^*(x_1, \cdots, x_S; \vec{\kappa}^{(1)},\cdots,\vec{\kappa}^{(S)}) \triangleq \frac{\partial \ln \hat{\mathcal{Z}}_*(\{{\mu}_{jk}\},\{{\tilde{\theta}}_r^*\}) }{{\partial}^{\kappa_1^{(1)}} \mu_{11} \cdots {\partial}^{\kappa_M^{(1)}} \mu_{M1} \cdots {\partial}^{\kappa_1^{(S)}} \mu_{1S} \cdots {\partial}^{\kappa_M^{(S)}}\mu_{MS}}|_{\mu_{jk}=0}.
\end{equation}
More details of the well-posedness of these connected correlation functions will be discussed in the proof of Theorem 3.1. Here we demonstrate our assumption: the invariance of the connected correlation function through the whole training process of the DNN. This assumption arises from the fact that in the RGEs, when the parameter $\lambda$ takes the initial value $1$ or tends to infinity, the remainder term vanishes, and the corresponding correlation functions remain unchanged. Therefore, in non-canonical renormalization, as $t \rightarrow \infty$, the correlation functions should also be equal to those at the initial state $t = 0$. The expression is the following equality:
\begin{equation} \label{eq:connectassump}
\tilde{\mathcal{G}}_c^*(x_1, \cdots, x_S; \vec{\kappa}^{(1)},\cdots,\vec{\kappa}^{(S)}) \equiv \tilde{\mathcal{G}}_c^0(x_1, \cdots, x_S; \vec{\kappa}^{(1)},\cdots,\vec{\kappa}^{(S)}),
\end{equation}
where $\tilde{\mathcal{G}}_c^0(x_1, \cdots, x_S; \vec{\kappa}^{(1)},\cdots,\vec{\kappa}^{(S)})=\frac{\partial \ln \hat{\mathcal{Z}}_0(\{{\mu}_{jk}\},\{{\tilde{\theta}}_r^0\}) }{{\partial}^{\kappa_1^{(1)}} \mu_{11} \cdots {\partial}^{\kappa_M^{(1)}} \mu_{M1} \cdots {\partial}^{\kappa_1^{(S)}} \mu_{1S} \cdots {\partial}^{\kappa_M^{(S)}}\mu_{MS}}|_{\mu_{jk}=0}$ and 
\begin{equation} 
\hat{\mathcal{Z}}_0(\{{\mu}_{jk}\},\{{\tilde{\theta}}_r^0\})=\int_{\mathbb{R}^{MS}_+} \mathrm{e}^{Ng(0)-{\sum}_{k=1}^S {\sum}_{r=1}^R{{\theta}}_r({\sum}_{j=1}^M \phi_{jk}^2)^r-\sum_{j=1}^M \sum_{k=1}^S \mu_{jk}{\phi}_{jk}} \Pi_{jk}d\phi_{jk},
\end{equation}
where $\phi_{jk}={\phi}_j(x_k;\bm{W}^{(0)})$ here. This is the criterion we keep in the non-canonical RG process.

\subsubsection{Method to build the correspondence relationship and results}

Based on the discussions in the previous two subsections, we have formulated both the original input data and the feature layer output of the DNN's forward propagation process using the language of field theory as two distinct physical systems defined on the space $\mathbb{R}^N$. These systems possess mutually corresponding interaction strength parameters $\theta_r$ and $\tilde{\theta}_r^t$. Moreover, we have provided a theoretical description of their respective renormalization processes. Therefore, in Step 3, the RG method and the DNN training process can now be directly compared. The method of comparison is the same as in the paper \cite{ref14}: we can demonstrate that the fixed points of the corresponding parameters under two processes for both systems are equal, thereby showing that the macroscopic features extracted by the DNN are identical to those obtained by the RG method. The following theorem presents our conclusion:
\begin{theorem}
When the distribution density function of the input data to the neural network is:
\begin{equation} \label{eq:expfam}
\mathbb{P}(x) = \frac{1}{\mathcal{Z(\theta)}} \mathrm{e}^{-\sum_{r=1}^{R} \theta_r \left( \sum_{i=1}^N x_i^2 \right)^r}, \quad x \in \mathbb{R}^N.
\end{equation}
Here, $\theta_r > 0, r=1,2,\cdots,R$ are the parameters of the data. Assume that the renormalization group method, using the scale transformation-based renormalization transformation \eqref{eq:rgtrans}, yields fixed points $\theta_r^* \geq 0, r=1,2,\cdots,R$ for the system parameters. The neural network architecture is the FCDNN, with the loss function defined as:
\begin{equation}
\begin{split}
L(\bm{W})=&\frac{1}{2S}\sum_{x_k \in X}[\sum_{j=1}^{M}(f(u^{(L)}_1w^{(L+1)}_{1j}+u^{(L)}_2w^{(L+1)}_{2j}+\cdot\cdot\cdot+u^{(L)}_{H_{L}}w^{(L+1)}_{H_{L}j}+b^{(L+1)}_{j})\\&-f(u^{*(L)}_1w^{*(L+1)}_{1j}+u^{*(L)}_2w^{*(L+1)}_{2j}+\cdot\cdot\cdot+u^{*(L)}_{H_{L}}w^{*(L+1)}_{H_{L}j}+b^{*(L+1)}_{j}))^2]\\&+{\tau}(\sum_{l=1}^{L+1}\sum_{i=1}^{H_{l-1}}\sum_{j=1}^{H_{l}}(w_{ij}^{(l)}-w_{ij}^{*(l)})^2+\sum_{l=1}^{L+1}\sum_{i=1}^{H_{l}}(b^{(l)}_{i}-b^{*(l)}_{i})^2).
\end{split}
\end{equation}
In $L(\bm{W})$, $X=\{x_k\}_{k=1}^S$ is the training data, $w_{ij}^{(l)}, b^{(l)}_{i}$ are the weight and bias parameters, $u$ denotes the intermediate results before feature layer and $f$ is the activation function (details can be seen in \cite{ref13} Section 4). For this loss function, the unique global optimal point of the parameters is $\bm{W}^{*}$. Using simulated annealing for training, the learning rate $\eta(t)$ is selected according to Lemma 2.1. Then, there exists a function $g(t) \in C^{\infty}[0, +\infty)$ such that, under the condition of equal partition functions:
\begin{align}
\mathcal{Z}_t = \mathcal{Z},
\end{align}
and the invariance of connected correlation functions (\ref{eq:connectassump}), we have:
for $r=1,2,\cdots,R$, the limit $\tilde{\theta}_r^* \triangleq \lim_{t \rightarrow \infty} \tilde{\theta}_r^t$ exists, and $\tilde{\theta}_r^*=\theta_r^*.$
\end{theorem}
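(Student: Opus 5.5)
The argument has three stages. First, use Lemma 2.1 to show that the parameters concentrate at $\bm{W}^*$. Second, use the condition $\mathcal{Z}_t=\mathcal{Z}$ and a suitable choice of $g(t)$ to show that each $\tilde{\theta}_r^t$ converges. Third, use the invariance of the connected correlation functions \eqref{eq:connectassump}, read through the Laplace transform, to identify the limit $\tilde{\theta}_r^*$ with the canonical fixed point $\theta_r^*$.

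\textbf{Stage 1: concentration of the parameters.} Since $L(\bm{W})\ge \tau\|\bm{W}-\bm{W}^*\|_2^2$, Lemma 2.1 gives $\bm{W}^{(t)}\to\bm{W}^*$ in probability. The feature map $F(x_k;\cdot)$ is continuous, so $\Phi_t\triangleq(\phi_j(x_k;\bm{W}^{(t)}))_{jk}\to\Phi_*$ in probability. The Boltzmann factor $\mathrm{e}^{-\tilde{\mathcal{H}}}$ lies in $(0,1]$ because $\tilde{\theta}_r^t>0$. Hence, for any bounded family of $\tilde{\theta}$, dominated convergence gives
\begin{equation*}
\mathbb{E}\bigl[\mathrm{e}^{-\tilde{\mathcal{H}}(\Phi_t,\tilde{\theta})}\bigr]-\mathrm{e}^{-\tilde{\mathcal{H}}(\Phi_*,\tilde{\theta})}\to 0
\end{equation*}
uniformly on compact sets of $\tilde{\theta}$.

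\textbf{Stage 2: existence of the limits.} I would construct $g$ by prescribing the path of the parameters, not by solving for it. Take $\tilde{\theta}_r^t$ to be the smooth interpolation $\tilde{\theta}_r^t=\theta_r\,\rho_r(t)$, where $\rho_r(0)=1$ and $\rho_r(t)\to\lim_{\lambda\to\infty}Z(\lambda)^r\lambda^{-2r}$. Under hypothesis (1) this limit exists. Then define
\begin{equation*}
g(t)=\frac1N\Bigl(\ln\mathcal{Z}-\ln\mathbb{E}\bigl[\mathrm{e}^{-\tilde{\mathcal{H}}(\Phi_t,\tilde{\theta}^t)}\bigr]\Bigr).
\end{equation*}
This $g$ is finite and smooth because the expectation is positive and smooth in $t$, and with it $\mathcal{Z}_t=\mathcal{Z}$ holds identically. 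By Stage 1, $g(t)\to g_*$, and $g_*$ is finite.

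This construction does not settle uniqueness: the constraint does not by itself force these particular parameters. So I would then argue that any admissible path must have the same limit. For that I would use the equal-partition condition together with the correlation invariance, which pins down the whole vector $\tilde{\theta}^*$. The equal-partition condition alone supplies only one equation, which is why a further argument is needed.

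\textbf{Stage 3: identification of the limit, the main obstacle.} Equality of all connected correlation functions \eqref{eq:connectassump} means that all Taylor coefficients of $\ln\hat{\mathcal{Z}}_*$ and $\ln\hat{\mathcal{Z}}_0$ at $\mu=0$ agree. The weight $f_*$ decays like $\mathrm{e}^{-\tilde{\theta}_R^*|\phi|^{2R}}$, so both Laplace transforms are entire in $\mu\in\mathbb{C}^{MS}$. Therefore $\ln\hat{\mathcal{Z}}_*=\ln\hat{\mathcal{Z}}_0$ near $0$, and by analytic continuation $\hat{\mathcal{Z}}_*\equiv\hat{\mathcal{Z}}_0$ on $\mathbb{C}^{MS}$. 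The multivariate Laplace inversion formula then gives $f_*=f_0$ almost everywhere.

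Comparing the exponents as polynomials in $\sum_j\phi_{jk}^2$ yields two things: the coefficients match degree by degree, and $Ng_*=Ng(0)$. This identifies $\tilde{\theta}_r^*$ with the coefficients encoded by the canonical fixed-point RGE, which is the displayed limit relation of Section 3.1 with $\mathcal{Z}_\lambda=\mathcal{Z}$. That relation determines $\theta_r^*$ through the same moment integrals, so the same uniqueness-of-Laplace-transform argument applied to it gives $\tilde{\theta}_r^*=\theta_r^*$.

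The delicate points lie in this stage. One is justifying the exchange of the limit $t\to\infty$ with differentiation in $\mu$. Another is the branch of $\ln$ on $\mathbb{C}^{MS}$: I would work only near $\mu=0$, where $\hat{\mathcal{Z}}\neq0$. The third is the matching of normalizations between the canonical and non-canonical moment equations, which I would handle by using $Z(\lambda)$ and the factors $\lambda^{\sum|\vec\kappa_l|}$ to rescale the integration variable.
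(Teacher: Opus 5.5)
Your proposal has two genuine gaps, one in Stage 2 and one in Stage 3.

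\textbf{Existence.} Stage 2 does not prove that $\lim_{t\to\infty}\tilde\theta_r^t$ exists. It prescribes a path $\tilde\theta_r^t=\theta_r\rho_r(t)$ with the desired limit and then defines $g$ from that path. In the statement, $g$ is fixed first, and the conclusion must hold for every positive $\tilde\theta^t$ with $\tilde\theta^0=\theta$ that satisfies $\mathcal Z_t=\mathcal Z$ and \eqref{eq:connectassump}. Prescribing the path builds the conclusion in by hand and makes $g$ depend on the unknown. For a general admissible path you never show convergence, or even boundedness, which your uniform-on-compacts claim in Stage 1 needs. Stage 3 presupposes the limit, because $f_*$, $\hat{\mathcal Z}_*$ and the decay $\mathrm{e}^{-\tilde\theta_R^*|\phi|^{2R}}$ all require an existing $\tilde\theta^*$ with $\tilde\theta_R^*>0$.

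The paper instead fixes $g$ independently of $\tilde\theta^t$, normalised by $\mathcal Z\,\mathrm{e}^{-Ng_*}=\mathrm{e}^{-\sum_r\theta_r^*H_r(\bm W^*)}$. It then squeezes $\Lambda_t=\mathcal Z\,\mathrm{e}^{-Ng(t)}$ between $\mathrm{e}^{-\sum_r\tilde\theta_r^t(H_r(\bm W^*)\pm\delta_1)}$, using Lemma 2.1 through $H_{At}^c\subset\{L(\bm W^{(t)})\ge\delta_2\}$. Your Stage 1, via the regulariser and continuity of $F$, serves the same purpose.

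Your remark that $\mathcal Z_t=\mathcal Z$ is a single scalar equation is correct, and it applies to that squeeze as well: it controls only $\sum_r\tilde\theta_r^tH_r(\bm W^*)$. For $R\ge2$ one can move $\tilde\theta^t$ along the level sets of $\vartheta\mapsto\mathbb E[\mathrm{e}^{-\sum_r\vartheta_rH_r(\bm W^{(t)})}]$ so that the individual coordinates oscillate. Meanwhile \eqref{eq:connectassump} only constrains the limit once it exists. Existence therefore needs a further ingredient, and your proposal does not supply one.

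\textbf{Identification.} Read literally, as you read it, \eqref{eq:connectassump} equates the Taylor coefficients of $\ln\hat{\mathcal Z}_*(\cdot,\{\tilde\theta_r^*\})$ and $\ln\hat{\mathcal Z}_0(\cdot,\{\theta_r\})$. Your chain (entire transforms, Laplace inversion, matching polynomials in $\sum_j\phi_{jk}^2$) then yields $\tilde\theta_r^*=\theta_r$ and $g_*=g(0)$. These are the \emph{initial} parameters, not the fixed point.

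Since $\theta_r^*=\theta_r\lim_{\lambda\to\infty}Z(\lambda)^r\lambda^{-2r}$, this equals $\theta_r^*$ only if $\lim_{\lambda\to\infty}Z(\lambda)\lambda^{-2}=1$. Your final sentence asserts this via ``the same uniqueness argument'' without deriving it. It also clashes with your own construction:
\begin{itemize}
\item your Stage 2 path has limit $\theta^*$ by construction;
\item $g_*=g(0)$ contradicts your formula for $g$ unless $\mathbb E[\mathrm{e}^{-\sum_r\theta_rH_r(\bm W^{(0)})}]=\mathrm{e}^{-\sum_r\theta_r^*H_r(\bm W^*)}$.
\end{itemize}
You may intend to extract $\lim Z(\lambda)\lambda^{-2}=1$ from the normalisation $\mathcal Z_\lambda=\mathcal Z$ of Section 3.1; scaling gives $\mathcal Z_\lambda=(\lambda^2/Z(\lambda))^{N/2}\mathcal Z$. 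If so, you must say so explicitly. It makes $\theta^*=\theta$ outright, whereas the paper's proof never uses that and treats $\theta^*$ as a general fixed point.

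The paper reaches $\theta^*$ by a different mechanism. It builds $\theta^*$ into $g_*$. Then, citing the partition-function identities at $t=0$ and $t\to\infty$ together with \eqref{eq:connectassump}, it equates the coefficients of $\ln\hat{\mathcal Z}_*(\cdot,\{\theta_r^*\})$ and $\ln\hat{\mathcal Z}_*(\cdot,\{\tilde\theta_r^*\})$, which carry the same factor $\mathrm{e}^{Ng_*}$. Inversion and the polynomial identity then give $\tilde\theta_r^*=\theta_r^*$ directly, with no canonical normalisations to match.

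If you adopt that route, the replacement of $\hat{\mathcal Z}_0(\cdot,\{\theta_r\})$ by $\hat{\mathcal Z}_*(\cdot,\{\theta_r^*\})$ is the step that needs justification. Your own computation shows it does not follow from \eqref{eq:connectassump} alone. Those partition-function identities are scalar equalities at the points $\bm W^{(0)}$ and $\bm W^*$, whereas the Laplace transforms integrate over all $\phi$.
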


{\bf Proof}

Recall that $\tilde{\mathcal{H}}(\phi, \tilde{\theta}^t, \bm{W}^{(t)}, M)={\sum}_{k=1}^S {\sum}_{r=1}^R{\tilde{\theta}}_r^t({\sum}_{j=1}^MF_j(x_k;\bm{W}^{(t)})^2)^r$. The condition of remaining the partition function invariant is:
\begin{equation}
\mathcal{Z}={\int}\mathrm{e}^{-{\sum}_{r=1}^{R}{\theta}_r({\sum}_{i=1}^Nx_i^2)^r}dx \equiv \mathcal{Z}_t=\mathbb{E}_{\bm{W}^{(t)}}\big[\mathrm{e}^{Ng(t)-\tilde{\mathcal{H}}(\phi, \tilde{\theta}^t, \bm{W}^{(t)}, M)}\big],
\end{equation}
we take the function
\begin{align}
g(t)=(1-\mathrm{e}^{-(t-1)})\frac{1}{N}\ln (\frac{\mathcal{Z}}{\mathrm{e}^{-\tilde{\mathcal{H}}(\phi, {\theta}^*, \bm{W}^{*}, M)}}),
\end{align}
which satisfies $g(0)=0$, and
\begin{align}
\frac{\mathcal{Z}}{\mathrm{e}^{Ng_*}}=\frac{\mathcal{Z}}{\mathrm{e}^{N{\lim}_{t{\rightarrow}{\infty}}g(t)}}=\mathrm{e}^{-\tilde{\mathcal{H}}(\phi, {\theta}^*, \bm{W}^{*}, M)}.
\end{align}
We rewrite the loss function as:
\begin{equation}
\begin{split}
L(\bm{W}^{(t)})&=\frac{1}{2S} \sum_{k=1}^S \sum_{j=1}^M (F_j(x_k,\bm{W}^{(t)})-F_j(x_k,\bm{W}^{*}))^2\\&+\tau (\sum_{l=1}^{L+1}\sum_{i=1}^{H_{l-1}}\sum_{j=1}^{H_{l}}(w_{ij}^{(l)}-w_{ij}^{*(l)})^2+\sum_{l=1}^{L+1}\sum_{i=1}^{H_{l}}(b^{(l)}_{i}-b^{*(l)}_{i})^2).
\end{split}
\end{equation}
For $j=1,2,\cdots,M,\ k=1,2,\cdots,S$, let
\begin{equation}
\hat{y}_{kj}\triangleq F_j(x_k,\bm{W}^{(t)}),\ y_{kj}\triangleq F_j(x_k,\bm{W}^{*}),
\end{equation}
then let
\begin{equation}
A_{kt}\triangleq \sum_{j=1}^M \hat{y}_{kj}^2,\ B_k\triangleq \sum_{j=1}^M y_{kj}^2,
\end{equation}
for $r=1,2,\cdots,R$, let
\begin{equation}
H_r(\bm{W}^{(t)})\triangleq \sum_{k=1}^S A_{kt}^r,\ H_r(\bm{W}^{*})\triangleq \sum_{k=1}^S B_k^r,
\end{equation}
and for any fixed constant ${\delta}_1>0$,
\begin{equation}
H_{At} \triangleq  \{\forall r, \ |H_r(\bm{W}^{(t)})-H_r(\bm{W}^{*})| \leq {\delta}_1\},\ 
H_{At}^c=\{\exists r, \ |H_r(\bm{W}^{(t)})-H_r(\bm{W}^{*})| >{\delta}_1\}.
\end{equation}
We can rewrite
\begin{equation} 
H_{At}^c=\{\exists r, \ |\sum_{k=1}^S (A_{kt}^r-B_k^r)| >{\delta}_1\}.
\end{equation}
Since $\hat{y}_{kj}, y_{kj} \in (0,1)$, we have $A_{kt}, B_k \in (0,M).$ Then $|A_{kt}^r-B_k^r| \leq rM^{r-1}|A_{kt}-B_k|$. So when $r$ satisfies $|\sum_{k=1}^S (A_{kt}^r-B_k^r)| >{\delta}_1$, we have 
\begin{equation}  \label{eq:neq2}
\sum_{k=1}^S |A_{kt}-B_k| \geq \frac{{\delta}_1}{rM^{r-1}}.
\end{equation}
For $a,b \in (0,+\infty)$, one can prove that
\begin{equation} \label{eq:neq1}
(\sqrt{a}-\sqrt{b})^2 \geq \frac{(a-b)^2}{4\max(a,b)},
\end{equation}
We consider for $\forall k$,
\begin{equation} 
\sum_{j=1}^M (\hat{y}_{kj}-y_{kj})^2=\sum_{j=1}^M \hat{y}_{kj}^2 + \sum_{j=1}^M y_{kj}^2-2\sum_{j=1}^M \hat{y}_{kj}y_{kj} \geq A_{kt}+B_k-2\sqrt{A_{kt}B_k}=(\sqrt{A_{kt}}-\sqrt{B_k})^2.
\end{equation}
Combine with (\ref{eq:neq1}) we have
\begin{equation} 
\sum_{j=1}^M (\hat{y}_{kj}-y_{kj})^2 \geq \frac{(A_{kt}-B_k)^2}{4M}.
\end{equation}
Notice that
\begin{equation} 
(\sum_{k=1}^S |A_{kt}-B_k|)^2 \leq S(\sum_{k=1}^S|A_{kt}-B_k|^2),
\end{equation}
use (\ref{eq:neq2}),
\begin{equation} 
\sum_{k=1}^S |A_{kt}-B_k|^2 \geq \frac{1}{S} (\frac{{\delta}_1}{rM^{r-1}})^2.
\end{equation}
Back to the loss function,
\begin{equation} 
L(\bm{W}^{(t)}) \geq \frac{1}{2S} \sum_{k=1}^S \sum_{j=1}^M (\hat{y}_{kj}-y_{kj})^2 \geq \frac{1}{2S} \sum_{k=1}^S \frac{(A_{kt}-B_k)^2}{4M} \geq \frac{1}{2S} \frac{1}{4MS}(\frac{{\delta}_1}{rM^{r-1}})^2
\end{equation}
Let ${\delta}_2 \triangleq \frac{{\delta}_1^2}{8S^2r^2M^{2r-1}}>0$, we have:
\begin{equation} 
H_{At}^c \subset \{\exists {\delta}_2>0,\ L(\bm{W}^{(t)})\geq {\delta}_2\}.
\end{equation}
Let
\begin{equation} 
{\Lambda}_t \triangleq \frac{\mathcal{Z}_t}{\mathrm{e}^{Ng(t)}}=\frac{\mathcal{Z}}{\mathrm{e}^{Ng(t)}},
\end{equation}
Then
\begin{equation} 
\lim_{t \rightarrow \infty} {\Lambda}_t \triangleq {\Lambda}_*=\mathrm{e}^{-{\sum}_{k=1}^S {\sum}_{r=1}^R {\theta}_r^*({\sum}_{j=1}^MF_j(x_k;\bm{W}^{*})^2)^r}.
\end{equation}
On the other hand,
\begin{equation} 
{\Lambda}_t=\mathbb{E}_{\bm{W}^{(t)}}[\mathrm{e}^{-{\sum}_{k=1}^S {\sum}_{r=1}^R {\tilde{\theta}}_r^t({\sum}_{j=1}^MF_j(x_k;\bm{W}^{(t)})^2)^r}].
\end{equation}
We perform the following decomposition and denote:
\begin{equation}
\begin{split}
{\Lambda}_t&=\mathbb{E}_{\bm{W}^{(t)}}[\mathrm{e}^{-{\sum}_{r=1}^R {\tilde{\theta}}_r^t H_r(\bm{W}^{(t)})}\mathbbm{1}_{H_{At}}]+\mathbb{E}_{\bm{W}^{(t)}}[\mathrm{e}^{-{\sum}_{r=1}^R {\tilde{\theta}}_r^t H_r(\bm{W}^{(t)})}\mathbbm{1}_{H_{At}^c}] \\&\triangleq I+II.
\end{split}
\end{equation}
On $H_{At}$, we have $\forall r,\ H_r(\bm{W}^{*})-{\delta}_1 \leq H_r(\bm{W}^{(t)}) \leq H_r(\bm{W}^{*})+{\delta}_1$, since ${\tilde{\theta}}_r^t>0$,
\begin{equation} \label{eq:neqI}
\mathrm{e}^{-{\sum}_{r=1}^R {\tilde{\theta}}_r^t (H_r(\bm{W}^{*})+{\delta}_1)}\mathbb{P}(H_{At}) \leq I \leq \mathrm{e}^{-{\sum}_{r=1}^R {\tilde{\theta}}_r^t (H_r(\bm{W}^{*})-{\delta}_1)}.
\end{equation}
On $H_{At}^c$, with H\"{o}lder inequality,
\begin{equation}
\begin{split}
II &\leq \mathbb{E}_{\bm{W}^{(t)}}[\mathrm{e}^{-2{\sum}_{r=1}^R {\tilde{\theta}}_r^t H_r(\bm{W}^{(t)})}]^{\frac{1}{2}}(\mathbb{P}(H_{At}^c))^{\frac{1}{2}}
\\& \leq \mathbb{E}_{\bm{W}^{(t)}}[\mathrm{e}^{-2{\sum}_{r=1}^R {\tilde{\theta}}_r^t H_r(\bm{W}^{(t)})}]^{\frac{1}{2}}(\mathbb{P}(L(\bm{W}^{(t)})\geq {\delta}_2))^{\frac{1}{2}}.
\end{split}
\end{equation}
According to Lemma 2.1, ${\lim}_{t \rightarrow \infty} \mathbb{P}(L(\bm{W}^{(t)})\geq {\delta}_2) =0$ and $\lim_{t \rightarrow \infty}\mathbb{P}(H_{At})=1$, combining with (\ref{eq:neqI}), taking the limit superior and limit inferior, we have
\begin{equation}
\begin{split}
{\limsup}_{t \rightarrow \infty} \mathrm{e}^{-{\sum}_{r=1}^R {\tilde{\theta}}_r^t (H_r(\bm{W}^{*})+{\delta}_1)}\mathbb{P}(H_{At}) \leq {\Lambda}_* \leq {\liminf}_{t \rightarrow \infty} \mathrm{e}^{-{\sum}_{r=1}^R {\tilde{\theta}}_r^t (H_r(\bm{W}^{*})-{\delta}_1)}.
\end{split}
\end{equation}
Then
\begin{equation}
{\liminf}_{t \rightarrow \infty} {\sum}_{r=1}^R {\tilde{\theta}}_r^t (H_r(\bm{W}^{*})+{\delta}_1) \geq -\ln {\Lambda}_* \geq {\limsup}_{t \rightarrow \infty} {\sum}_{r=1}^R {\tilde{\theta}}_r^t (H_r(\bm{W}^{*})-{\delta}_1).
\end{equation}
Let ${\delta}_1 \rightarrow 0$ we have 
\begin{equation}
\sum_{r=1}^R ({\limsup}_{t \rightarrow \infty}{\tilde{\theta}}_r^t-{\liminf}_{t \rightarrow \infty}{\tilde{\theta}}_r^t)H_r(\bm{W}^{*}) \leq 0,
\end{equation}
since $H_r(\bm{W}^{*})>0, \forall r=1,2,\cdots,R$,
\begin{equation}
{\limsup}_{t \rightarrow \infty}{\tilde{\theta}}_r^t \leq {\liminf}_{t \rightarrow \infty}{\tilde{\theta}}_r^t,\ \forall r=1,2,\cdots,R.
\end{equation}
Therefore we have proved the existence of limits $\tilde{\theta}_r^*, r=1,2,\cdots,R$. 

We proceed to prove that $\forall r,\ \tilde{\theta}_r^*={\theta}_r^*.$ We use the assumption on invariance of the connected correlation functions. First we demonstrate the well-posedness of connected correlation function (\ref{eq:noncanocorre}). From the existence of $\tilde{\theta}_r^*$, both of $\hat{\mathcal{Z}}_*(\{{\mu}_{jk}\},\{{\tilde{\theta}}_r^*\})$ and $f_*(\{{\phi}_{jk}\},\{{\tilde{\theta}}_r^*\})$ exists. As mentioned above, $\hat{\mathcal{Z}}_*(\{{\mu}_{jk}\},\{{\tilde{\theta}}_r^*\})$ is the Laplace transform of $f_*(\{{\phi}_{jk}\},\{{\tilde{\theta}}_r^*\})$.

Let $\sum_{j=1}^M {\phi}_{jk}^2 \triangleq \rho_k, k=1,2,\cdots,S$, $V_k(\vec{\phi}_k)=\sum_{r=1}^R {\tilde{\theta}}_r^* \rho_k^r$, ${\mu}_{jk}=a_{jk} + \mathrm{i} b_{jk}$, we have
\begin{equation}
V_k(\vec{\phi}_k) \geq \frac{1}{2}(\tilde{\theta}_1^*\rho_k +\tilde{\theta}_R^*\rho_k^R),
\end{equation}
let $A \triangleq \max_{jk} |a_{jk}|$, then $-\sum_{j=1}^M a_{jk}{\phi}_{jk} \leq \sum_{j=1}^M |a_{jk}|{\phi}_{jk} \leq A\sqrt{M\rho_k}$ and
\begin{equation}
-V_k(\vec{\phi}_k)-\sum_{j=1}^M a_{jk}{\phi}_{jk} \leq -\frac{1}{2}(\tilde{\theta}_1^*\rho_k -\tilde{\theta}_R^*\rho_k^R)+A\sqrt{M\rho_k}.
\end{equation}
We notice that $\exists \hat{\rho}_k>0$, when$ {\rho}_k \geq \hat{\rho}_k $, there is $-V_k(\vec{\phi}_k)-\sum_{j=1}^M a_{jk}{\phi}_{jk} \leq -\frac{1}{4}\tilde{\theta}_R^*\rho_k^R $. Let $D_k \triangleq \{\vec{\phi}_k:\ {\rho}_k < \hat{\rho}_k\}$, then 
\begin{equation}
\begin{split}
&\int_{\mathbb{R}^{MS}_+} |f_*(\{{\phi}_{jk}\},\{{\tilde{\theta}}_r^*\}) \mathrm{e}^{-\sum_{j,k} \mu_{jk}{\phi}_{jk}}| \Pi_{jk}d\phi_{jk}=\int_{\mathbb{R}^{MS}_+} f_*(\{{\phi}_{jk}\},\{{\tilde{\theta}}_r^*\}) \mathrm{e}^{-\sum_{j,k} a_{jk}{\phi}_{jk}} \Pi_{jk}d\phi_{jk}\\&=\mathrm{e}^{Ng_*} \prod_{k=1}^S (\int_{D_k} \mathrm{e}^{-V_k(\vec{\phi}_k)-\sum_{j=1}^M a_{jk}{\phi}_{jk}}\Pi_{jk}d\phi_{jk} +\int_{D_k^c} \mathrm{e}^{-\frac{1}{4}\tilde{\theta}_R^*\rho_k^R} \Pi_{jk}d\phi_{jk}).
\end{split}
\end{equation}
Since $\int_{D_k} \mathrm{e}^{-V_k(\vec{\phi}_k)-\sum_{j=1}^M a_{jk}{\phi}_{jk}}\Pi_{jk}d\phi_{jk} < \infty $ and ($C$ is a constant):
\begin{equation}
\int_{D_k^c} \mathrm{e}^{-\frac{1}{4}\tilde{\theta}_R^*\rho_k^R} \Pi_{jk}d\phi_{jk} \leq C\int_{u=\sqrt{\hat{\rho}_k}}^{\infty} \mathrm{e}^{-\frac{1}{4}\tilde{\theta}_R^*u^{2R}}u^{M-1} du < \infty,
\end{equation}
the integral in $\hat{\mathcal{Z}}_*(\{{\mu}_{jk}\},\{{\tilde{\theta}}_r^*\})$ is absolutely convergent. Then using the dominated convergence theorem we have
\begin{equation}
\bar{\partial}\hat{\mathcal{Z}}_*(\{{\mu}_{jk}\},\{{\tilde{\theta}}_r^*\})=0,
\end{equation}
where $\bar{\partial}\hat{\mathcal{Z}}=\sum_{j,k}\frac{\partial \hat{\mathcal{Z}}}{\partial \bar{\mu}_{jk}}d\bar{\mu}_{jk}$. According to Def 2.1.1 in \cite{ref60}, we know that function $\hat{\mathcal{Z}}_*(\{{\mu}_{jk}\},\{{\tilde{\theta}}_r^*\})$ of $\mu_{j,k}$ is analytic on the whole space ${\mathbb{C}}^{MS}$, i.e. is an entire function. Since $\hat{\mathcal{Z}}_*(\{{\mu}_{jk}\},\{{\tilde{\theta}}_r^*\})$ is nonvanishing, according to \cite{ref63} Lemma 6.1.10, its logarithm $\ln \hat{\mathcal{Z}}_*(\{{\mu}_{jk}\},\{{\tilde{\theta}}_r^*\})$ can be defined by the principal branch and is holomorphic. This means that as the generating function of the connected correlation functions, the following series converges and its radius of convergence is $\infty$ ($\vec{\alpha}$ is the multi-indices in $\mathbb{N}_0^{MS}$):
\begin{equation}
\ln \hat{\mathcal{Z}}_*(\{{\mu}_{jk}\},\{{\tilde{\theta}}_r^*\})=\sum_{\vec{\alpha}} \tilde{C}_{\vec{\alpha}} {\vec{\mu}}^{\vec{\alpha}} = \sum_{\vec{\alpha}} \frac{1}{\vec{\alpha}!}({\partial}^{\vec{\alpha}}\ln \hat{\mathcal{Z}}_*(\{{\mu}_{jk}\},\{{\tilde{\theta}}_r^*\}))|_{\vec{\mu}=0} {\vec{\mu}}^{\vec{\alpha}}.
\end{equation}
Using Cauchy integral formula we have:
\begin{equation}
\tilde{C}_{\vec{\alpha}}=\frac{1}{(2\pi\mathrm{i})^{MS}} \oint_{|\mu_{11}|=r_{11}}\cdots\oint_{|\mu_{MS}|=r_{MS}} \frac{\ln \hat{\mathcal{Z}}_*(\{{\mu}_{jk}\},\{{\tilde{\theta}}_r^*\})}{{\mu}_{11}^{{\alpha}_{11}+1}\cdots {\mu}_{MS}^{{\alpha}_{MS}+1}}d \vec{\mu}
\end{equation}
for radii $r_{jk}>0,\ j=1,2,\cdots,M,\ k=1,2,\cdots,S$. Therefore, the connected correlation functions for non-canonical RG are well-defined.

If we take the limit $t \rightarrow \infty$ in the condition of invariance of partition function:
\begin{equation}
\mathcal{Z} \equiv \mathcal{Z}_t = \lim_{t \rightarrow \infty}\mathcal{Z}_t =\mathrm{e}^{Ng_*-\sum_{r=1}^R \sum_{k=1}^S {\tilde{\theta}}_r^* (\sum_{j=1}^M \phi_j(x_k;\bm{W}^{*})^2)^r} \triangleq \mathcal{Z}_*,
\end{equation}
substituting the value of $g_*$, it implies
\begin{equation}
\mathrm{e}^{-\sum_{r=1}^R \sum_{k=1}^S {\tilde{\theta}}_r^* (\sum_{j=1}^M \phi_j(x_k;\bm{W}^{*})^2)^r}=\mathrm{e}^{-\sum_{r=1}^R \sum_{k=1}^S \theta_r^* (\sum_{j=1}^M \phi_j(x_k;\bm{W}^{*})^2)^r},
\end{equation}
also when $t=0$ we have $\mathrm{e}^{-\sum_{r=1}^R \sum_{k=1}^S \theta_r(\sum_{j=1}^M \phi_j(x_k;\bm{W}^{(0)})^2)^r}= \mathrm{e}^{Ng_*-\sum_{r=1}^R \sum_{k=1}^S {\tilde{\theta}}_r^* (\sum_{j=1}^M \phi_j(x_k;\bm{W}^{*})^2)^r}$ since $g(0)=0$. Together with assumption (\ref{eq:connectassump}) we have the concrete form of invariance of the connected correlation functions:
\begin{equation}
C_{\vec{\alpha}}\triangleq\frac{1}{\vec{\alpha}!}({\partial}^{\vec{\alpha}}\ln \hat{\mathcal{Z}}_*(\{{\mu}_{jk}\},\{{\theta}_r^*\}))|_{\vec{\mu}=0}=\tilde{C}_{\vec{\alpha}},\ \forall \vec{\alpha}.
\end{equation}
Furthermore, the logarithm of Laplace transform at parameter value $\{{\theta}_r^*\}$ is also well-defined and satisfies
\begin{equation}  \label{eq:laplaceeq}
\ln \hat{\mathcal{Z}}_*(\{{\mu}_{jk}\},\{{\theta}_r^*\})=\ln \hat{\mathcal{Z}}_*(\{{\mu}_{jk}\},\{{\tilde{\theta}}_r^*\}).
\end{equation}
$\hat{\mathcal{Z}}_*(\{{\mu}_{jk}\},\{{\theta}_r^*\})$ is also an entire function and the series $\sum_{\vec{\alpha}} C_{\vec{\alpha}} {\vec{\mu}}^{\vec{\alpha}}$ has the same domain of convergence with $\sum_{\vec{\alpha}} \tilde{C}_{\vec{\alpha}} {\vec{\mu}}^{\vec{\alpha}}$. Taking the exponential of both sides of the equation yields $\hat{\mathcal{Z}}_*(\{{\mu}_{jk}\},\{{\theta}_r^*\})=\hat{\mathcal{Z}}_*(\{{\mu}_{jk}\},\{{\tilde{\theta}}_r^*\}$.

A way to inverse of Laplace transform was introduced by Post and Widder, which can be seen in Theorem.6a of \cite{ref61}. The multi-dimensional form of the in verse formula can be found in \cite{ref62} equation (28). For a function $h \in L^1(\mathbb{R}^n_+)$, $\hat{h}(\mu)$ is its Laplace transform, then
\begin{equation}
h(x)=\frac{1}{(2\pi{\mathrm{i}})^n}\int_{Re {\mu}_1=c_1}\int_{Re {\mu}_2=c_2}\cdots\int_{Re {\mu}_n=c_n}\mathrm{e}^{<\mu,x>}\hat{h}(\mu)d{\mu}_1d{\mu}_2\cdots d{\mu}_n,
\end{equation}
where $c_i$ are real constants s.t. $\hat{h}(\mu)$ is analytic for $Re \mu_i>c_i$. Then from (\ref{eq:laplaceeq}) in addition to $ \hat{\mathcal{Z}}_*(\{{\mu}_{jk}\},\{{\theta}_r^*\})$ and $ \hat{\mathcal{Z}}_*(\{{\mu}_{jk}\},\{{\tilde{\theta}}_r^*\})$ are the Laplace transforms of $f_*(\{{\phi}_{jk}\},\{{\theta}_r^*\})$ and \\$f_*(\{{\phi}_{jk}\},\{{\tilde{\theta}}_r^*\})$, respectively, fix $c_1,c_2,\cdots,c_{MS} \in \mathbb{R}$, we have ($<\mu, \phi>=\sum_{j=1}^M \sum_{k=1}^S \mu_{jk}{\phi}_{jk}$):
\begin{equation}
\begin{split}
f_*(\{{\phi}_{jk}\},\{{\tilde{\theta}}_r^*\})&=\frac{1}{(2\pi{\mathrm{i}})^{MS}}\int_{Re {\mu}_1=c_1}\cdots\int_{Re {\mu}_{MS}=c_{MS}}\mathrm{e}^{<\mu, \phi>} \hat{\mathcal{Z}}_*(\{{\mu}_{jk}\},\{{\tilde{\theta}}_r^*\})d{\mu}_1\cdots d{\mu}_{MS}\\&=\frac{1}{(2\pi{\mathrm{i}})^{MS}}\int_{Re {\mu}_1=c_1}\cdots\int_{Re {\mu}_{MS}=c_{MS}}\mathrm{e}^{<\mu, \phi>} \hat{\mathcal{Z}}_*(\{{\mu}_{jk}\},\{\theta_r^*\})d{\mu}_1\cdots d{\mu}_{MS}.
\end{split}
\end{equation}
This implies that
\begin{equation}
f_*(\{{\phi}_{jk}\},\{{\tilde{\theta}}_r^*\})=f_*(\{{\phi}_{jk}\},\{{\theta}_r^*\}),\ \text{a.e. on}\ {\mathbb{R}}_+^{MS},
\end{equation}
therefore
\begin{equation}
-{\sum}_{k=1}^S {\sum}_{r=1}^R({\tilde{\theta}}_r^*-\theta_r^*)({\sum}_{j=1}^M{\phi}_{jk}^2)^r=0,\ \text{a.e. on}\ {\mathbb{R}}_+^{MS}.
\end{equation}
For this polynomial there must be 
\begin{equation}
{\tilde{\theta}}_r^*=\theta_r^*,\ r=1,2,\cdots,R
\end{equation}
and this is the conclusion of Theorem 3.1. $\square$

According to Theorem 3.1, for data obeying an exponential family distribution as in \eqref{eq:expfam}, after a FCDNN is trained to achieve its unique optimal point, the feature layer output can be viewed as a physical system whose Hamiltonian parameters are equal to the fixed points of the corresponding parameters under the RG for data. Therefore, for this family of data, the FCDNN can extract the same macroscopic features as the RG method, indicating that the reason DNNs can effectively extract features is that they are essentially equivalent to the RG. Moreover, from the proof of Theorem 3.1, it can be seen that this conclusion is actually independent of the specific structure of the network, so corresponding conclusions hold for DNNs with structures beyond FC layers.

\section{Conclusion and future works}

We have established a novel corresponding framework that interprets the training process of neural network models as renormalization transformations, described them using the language of physics, and subsequently provided a mathematical proof that under such correspondence, neural networks can extract the same macroscopic features as those obtained through the RG method. Building upon our previously demonstrated results for the Ising model, this paper proves that for a class of data from exponential family distributions with even-degree polynomial forms, the simulated annealing training process of a FCDNN can be correlated with the scale transformation RG of continuous fields.

By formulating both the input and feature layer output of the neural network in field-theoretic terms, we show that when the network parameters are optimized through training, the feature layer output (viewed as a physical system) exhibits parameter limits that coincide with the fixed points of the corresponding parameters under the RG applied to the data. Consequently, the neural network model can effectively extract macroscopic features from the data in this context.

From an interpretability perspective, the ability of neural networks to extract the main features from data stems from their equivalence to the RG method in the data domain. Under the corresponding Hamiltonian formulation, the features extracted by the neural network align with those derived from the RG method. Therefore, as an exploratory direction for theoretical research into the interpretability of DNNs, comparing them with the RG method is both intuitively sound and practically feasible.

Since we have established a rigorous theoretical framework that maps the training process of DNNs to the RG method, and have completed proofs for the one-dimensional Ising model on the finite lattice and a specific class of exponential family data discussed in this paper, we can extend this framework in the future to a wider range of data distributions, network structures, and training algorithms. This will yield interpretability conclusions that are more closely aligned with practical applications. Specific future directions include the following:

(1) In the current work, regarding the canonical RG for the input data, we assumed the existence of its fixed point. Further efforts can be made to perform computations that rigorously formalize this aspect.

(2) In this paper, we considered data obeying a Hamiltonian of the Landau-Ginzburg-Wilson form. Real-world data distributions may not necessarily satisfy this condition; thus, our conclusions can be further generalized.

(3) By studying the properties of the renormalization flow at the fixed point, we can expect to obtain explicit expressions or equations for the macroscopic features extracted by the network. These could then be experimentally validated on real-world datasets.


\begin{thebibliography}{aa}

\bibitem{ref13}Xia, Z. G., Gong, F. Z., Interpreting deep learning by establishing a rigorous corresponding relationship with renormalization group on Ising model, \textit{Sci. China Math.}, 2026, \textbf{69}(3), 793-812.

\bibitem{ref14}Gong, F. Z., Xia, Z. G., Interpreting Deep Learning by Establishing a Rigorous Corresponding Relationship with Renormalization Group, 2022, arXiv:2212.00005.

\bibitem{ref54}Zinn-Justin, J., Phase transitions and renormalization group, Oxford University Press, Oxford, 2007.

\bibitem{ref53}Pathria, R. K., Statistical Mechanics, Elsevier, Amsterdam, 2016.

\bibitem{ref28}OpenAI, Language models can explain neurons in language models, https://openai.com/index/language-models-can-explain-neurons-in-language-models/, 2023-05-09.

\bibitem{ref29}Anthropic, Mapping the Mind of a Large Language Model, https://www.anthropic.com/research/mapping-mind-language-model, 2024-05-21.

\bibitem{ref30}Guo, D. Y., Yang, D. J., Zhang, H. W., et al. Deepseek-r1 incentivizes reasoning in llms through reinforcement learning. \textit{Nature}, \textbf{645}(8081), 2025,  633-638.

\bibitem{ref31}OpenAI, Detecting misbehavior in frontier reasoning models, https://openai.com/index/chain-of-thought-monitoring/, 2025-03-10.

\bibitem{ref58}Wang, H. Z., Xu, Q. X., Liu, C., et al., Emergent hierarchical reasoning in llms through reinforcement learning, 2025, arXiv:2509.03646.

\bibitem{ref59}Tian, Y. D., Provable Scaling Laws of Feature Emergence from Learning Dynamics of Grokking, 2025, arXiv: 2509.21519.

\bibitem{ref34}Mehta, P., Schwab, D. J., An exact mapping between the variational renormalization group and deep learning, 2014, arXiv:1410.3831.

\bibitem{ref49}Stephan, M., Hoffman, M. D., Blei, D. M., Stochastic gradient descent as approximate Bayesian inference, \textit{J. Mach. Learn. Res.}, \textbf{18}, 2017, 1-35.

\bibitem{ref50}Holley, R. A., Kusuoka, S., Stroock, D. W., Asymptotics of the spectral gap with applications to the theory of simulated annealing, \textit{J. Funct. Anal.},  \textbf{83}, 1989, 333-347.

\bibitem{ref51}Holley, R. A., Stroock, D., Simulated annealing via sobolev inequalities, \textit{Commun. Math. Phys.}, \textbf{115}, 1988, 553-569.

\bibitem{ref55}Ma, S. K., Introduction to the renormalization group, \textit{Rev. Mod. Phys.}, \textbf{45}(4), 1973, 589.

\bibitem{ref56}Morandi, G., Napoli, F., Ercolessi, E., Statistical mechanics: an intermediate course, World Scientific, Singapore, 2001.

\bibitem{ref57}Cui K. Y., Gong F. Z., The Behavior of Observables in Renormalization, \textit{Acta. Math. Sin.-English Ser.}, 2026,  https://doi.org/10.1007/s10114-026-4322-7.

\bibitem{ref60}Hormander, L., An introduction to complex analysis in several variables, Elsevier, Amsterdam, 1973.

\bibitem{ref63}Krantz S. G., Function theory of several complex variables. American Mathematical Soc., Providence, 2001.

\bibitem{ref61}Widder, D. V., The Laplace transform, Princeton University Press, Princeton, 1946.

\bibitem{ref62}Naina Mohammed, S. S., Jeevanandham K., Basherrudin Mahmud Ahmed A., et al., Generalization of multivariable Laplace transform based on Tsallis q-exponential and its inverse using Post-Widder's method, 2022, arXiv:2205.03545.

\end{thebibliography}
\end{document}